\documentclass[a4paper,11pt]{article}
\usepackage{arxiv}

\usepackage[utf8]{inputenc} % allow utf-8 input
\usepackage[T1]{fontenc}    % use 8-bit T1 fonts
\usepackage{hyperref}       % hyperlinks
\usepackage{url}            % simple URL typesetting
\usepackage{booktabs}       % professional-quality tables
\usepackage{amsfonts}       % blackboard math symbols
\usepackage{amsmath}        % T added
\usepackage{amsthm}        % T added
\usepackage{xcolor}         % T added for comments
\usepackage{graphicx}       % T added for figs
\usepackage{subfigure}      % T added for figs
\usepackage{multirow}       % T added for multiple rows in table
\usepackage[capitalize]{cleveref}       % T added for better referencing
\usepackage{nicefrac}       % compact symbols for 1/2, etc.
\usepackage{microtype}      % microtypography
\usepackage{caption}

{\theoremstyle{plain}
\newtheorem{theorem}{Theorem}
}
{\theoremstyle{definition}

}

\usepackage{environ}
\newcommand{\acksection}{\section*{Acknowledgments}}
\NewEnviron{ack}{%
  \acksection
  \BODY
}

\usepackage{soul}

\newtheorem{corollary}[theorem]{Corollary}
\usepackage{algorithm}
\usepackage{algpseudocode}

\title{Unsupervised Learning of the Total Variation Flow} 
\author{Tamara G. Grossmann\thanks{Department of Applied Mathematics and Theoretical Physics, University of Cambridge, Cambridge, UK \hspace*{1.8em} (\texttt{\{tg410,sd870,cbs31\}@cam.ac.uk})} , Sören Dittmer\footnotemark[1]$\,$ \thanks{Center for Industrial Mathematics, University of Bremen, Bremen, Germany} , Yury Korolev\thanks{Department of Mathematical Sciences, University of Bath, Bath, UK (\texttt{ymk30@bath.ac.uk})} $\,$ and Carola-Bibiane Schönlieb\footnotemark[1]}

\usepackage{url}
\usepackage{hyperref}
\hypersetup{
    unicode=false,          % non-Latin characters in 
    colorlinks=true,       % false: boxed links; true: colored links
    linkcolor=red,          % color of internal links (change box color with linkbordercolor)
    citecolor=green,        % color of links to bibliography
    filecolor=magenta,      % color of file links
    urlcolor=blue           % color of external links
}

\usepackage{listings}
\lstset{basicstyle=\footnotesize\ttfamily,
  showstringspaces=false,
  commentstyle=\color{red},
  keywordstyle=\color{blue}
}

\usepackage{xspace}

\begin{document}
\maketitle

\vspace{20pt}
\begin{abstract}
The total variation (TV) flow generates a scale-space representation of an image based on the TV functional. This gradient flow observes desirable features for images, such as sharp edges and enables spectral, scale, and texture analysis. Solving the TV flow is challenging; one reason is the the non-uniqueness of the subgradients. The standard numerical approach for TV flow requires solving multiple non-smooth optimisation problems. Even with state-of-the-art convex optimisation techniques, this is often prohibitively expensive and strongly motivates the use of alternative, faster approaches. Inspired by and extending the framework of physics-informed neural networks (PINNs), we propose the TVflowNET, an unsupervised neural network approach, to approximate the solution of the TV flow given an initial image and a time instance. The TVflowNET requires no ground truth data but rather makes use of the PDE for optimisation of the network parameters. We circumvent the challenges related to the non-uniqueness of the subgradients by additionally learning the related diffusivity term. Our approach significantly speeds up the computation time and we show that the TVflowNET approximates the TV flow solution with high fidelity for different image sizes and image types. Additionally, we give a full comparison of different network architecture designs as well as training regimes to underscore the effectiveness of our approach.
\end{abstract}

\section{Introduction}
The total variation (TV) functional plays an important role in image processing. Introduced by Rudin, Fatemi, and Osher in 1992~\cite{RUDIN1992} for image denoising, it has since found successful applications in noise removal~\cite{RUDIN1992,Moeller2015}, image reconstruction~\cite{Wang2008}, and segmentation~\cite{Unger2008}, among many others. It is particularly suitable for image processing as it enforces piecewise constant regions and is edge-preserving. Minimising the TV functional through gradient descent yields the total variation flow~\cite{Andreu2001,Andreu2002,Bellettini2002}, a gradient flow evolving an image based on the subdifferential of the TV functional. The TV flow gives rise to spectral, scale, and texture analysis~\cite{Aujol2005,Brox2006}. In recent years, Gilboa introduced the spectral total variation decomposition~\cite{Gilboa2013,Gilboa2014} using the solution to the TV flow. The nonlinear spectral decomposition enables filtering and texture extraction at different scales based on the size and contrast of the structures in an image. The TV flow is the underlying PDE of the spectral TV decomposition. Applications of the decomposition include image denoising~\cite{Moeller2015}, image fusion~\cite{Benning2017,Zhao2018,Hait2019,liu2021multimodal}, segmentation for biomedical images~\cite{Zeune2017}, and texture separation and extraction~\cite{Brox2004,Horesh2016,Cohen2021}. There exists extensive theory on the TV flow~\cite{Andreu2001,Andreu2002,Bellettini2002,Caselles2013,Kinnunen2022}, as well as numerical studies~\cite{Feng2003,Breuss2006,Giga2019} and theory on the nonlinear spectral decomposition~\cite{Gilboa2013,Gilboa2014,Gilboa2018,Burger2016,bungert2019nonlinear}. However, obtaining the spectral TV decomposition of an image is computationally costly, mostly due to the need to solve the TV flow at every scale. Computing a solution to the TV flow is challenging because the subdifferential of TV is not a singleton unless the image has no constant regions. In this case, a subgradient of minimum norm must be chosen~\cite{bungert2019nonlinear}. Numerical methods either amount to modifying the gradient of the image in constant regions to make sure that the subdifferential is single-valued~\cite{Brox2004,Feng2003}, or they involve implicit schemes which requires solving multiple non-smooth optimisation problems~\cite{Gilboa2013,Gilboa2014}. The first option introduces artefacts, while the second one is computationally expensive, although work on improving its efficiency continues~\cite{Cohen2021,cohen:2021b}. 

We aim to leverage recent advances in applying deep learning to solve PDEs~\cite{Bar2019,E2018,Long2018,Lu2021,Raissi2019}. In particular, Raissi et al.~\cite{Raissi2019} introduced the physics-informed neural networks (PINNs) that approximate the PDE solution through a neural network. The classical model-driven approach to solving the TV flow without the need for smoothing requires solving multiple non-smooth optimisation problems, which is computationally expensive. Therefore, we propose a deep learning approach to approximate the TV flow solution for a given initial image and time instance. Previously, Getreuer et al.~\cite{Getreuer2021} learned the solution to the TV flow using the so-called BLADE network, emulating the Euler method. This supervised approach relies on having ground truth solutions of the TV flow through numerical approximations. It works as a time-stepping scheme, depending on the TV flow solution in previous instances.

\paragraph{Contributions}
This paper presents the TVflowNET, a neural network approach designed to approximate solutions to the TV flow problem. The approach is unsupervised, providing flexible and fast computation of TV flow solutions for arbitrary time instances and initial data. The main contributions of this work are as follows:
\begin{itemize}
    \item We introduce a novel energy functional that, when minimised, yields the solution to the entire TV flow of an image up to a specific time. The main difference between our approach and standard applications of PINNs is that our network learns the solution of the TV flow from an arbitrary initial image, as opposed to standard approaches that learn the solution of a PDE with fixed initial and boundary conditions. In other words, we learn a mapping from the product of the space of images (say, $L^2$) and time to the space of images;
    \item As an explicit form $\textnormal{div}\left( \nabla u / \lvert \nabla u \rvert \right)$ of the TV subgradient may be numerically unstable, we propose a loss functional that does not rely on this explicit form and instead uses the pointwise characterisation of the TV subgradient from~\cite{Bredies2016}. As a by-product, we also learn the subgradient at any time;
    \item In our numerical experiments, we first show that minimising the proposed loss functional via a joint space-time optimisation indeed approximates the solution to the TV flow by comparing it to time-stepping-based numerical methods - this step does not involve any neural networks; 
    \item We investigate the performance of the neural network approach on three different architecture designs and for four training regimes based on training images of different sizes. We evaluate the generalisability of these designs on images of different sizes and types. We demonstrate that the TVflowNET indirectly learns the behaviour of eigenfunctions (e.g. disks) of the subdifferential and retains TV flow properties, such as one-homogeneity for a certain set of contrast change factors;
    \item We show that the TVflowNET can successfully obtain the spectral TV decomposition, leveraging automatic differentiation in time to calculate the TV transform;
    \item Notably, we achieve a remarkable two orders of magnitude improvement in computation time compared to the model-driven approach. This significant reduction in processing time enhances the practicality and efficiency of TV flow solutions via TVflowNET in real-world applications.
\end{itemize}

This paper is structured as follows: First, we introduce the mathematical background of the TV flow and its classical numerical solution in Section~\ref{sec.maths_background}. We then present the deep learning methodology, the TVflowNET, to solve the TV flow in Section~\ref{sec.TVflowNET}. This includes the design of the loss functional and the network architectures we consider. We showcase our results for the joint space-time optimisation approach and the TVflowNET in Section~\ref{sec.results}. The latter is evaluated for generalisability and one-homogeneity, along with the performance for the spectral TV decomposition and an analysis of the computation time. Lastly, we give concluding remarks on the results and future perspectives.

\section{Mathematical Background}\label{sec.maths_background}
\begin{figure*}[t]
    \centering
    \includegraphics[width=\textwidth]{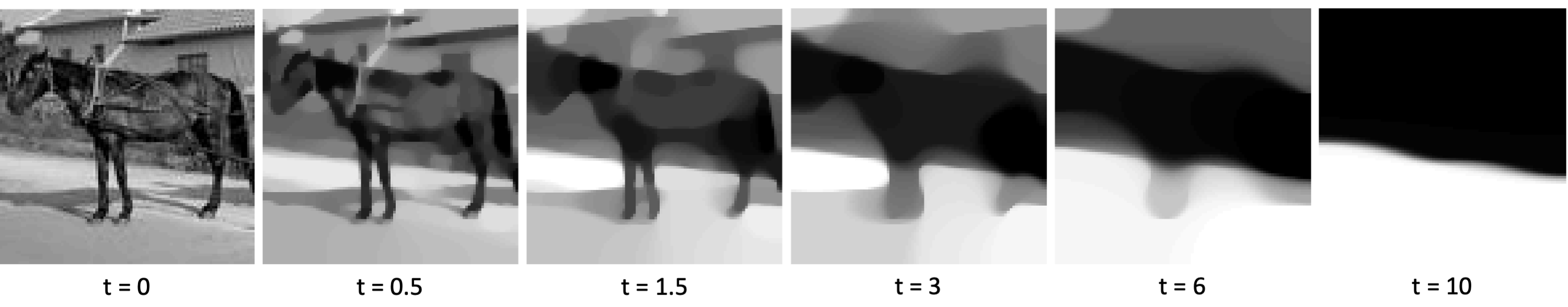}
    \caption[Example of the TV flow evolution.]{Example of the TV flow evolution. The TV flow solution at increasing time instances leads to more piecewise constant regions in the image. The initial image is taken from the STL-10 dataset~\cite{Coates2011}.}
    \label{fig.CH2.TVflow_horse_example}
\end{figure*}
\subsection{Theory}
Let us consider $\Omega \subset \mathbb{R}^2$ a bounded image domain with Lipschitz continuous boundary $\partial \Omega$. Following~\cite{chambolle:1997}, we can define the \emph{total variation} functional on $L^2(\Omega)$ as:
\begin{align}\label{eq.CH2.TV}
    J_{TV}(u) := 
    \begin{cases}
        \sup_{\lVert \varphi \rVert_{\infty} \leq 1} \int_{\Omega} u \, \textnormal{div} \, \varphi  \, \mathrm{d}x, \quad &\text{whenever this supremum is finite,} \\
        +\infty &\text{otherwise.}
    \end{cases} 
\end{align}
The supremum here is taken over the space of smooth compactly supported functions $\varphi := (\varphi_1,\varphi_2) \in C_0^{\infty}(\Omega; \mathbb R^2)$ and $\lVert \varphi \rVert_{\infty} := \sup_{x \in \Omega} \lVert \varphi(x) \rVert_2$. 

Minimising the TV functional~\eqref{eq.CH2.TV} through gradient descent yields the total variation flow~\cite{Andreu2001,Andreu2002,Bellettini2002}, a gradient flow evolving an image based on the subdifferential of the TV functional. The TV functional is non-smooth, and when differentiating with respect to $u$, we, therefore, need to consider the subdifferential defined as $\partial J_{TV}(u) = \{p \in L^2(\Omega): J_{TV}(v) \geq J_{TV}(u) + \langle p,v-u \rangle \,\, \forall v \in L^2(\Omega)\}$. If $\nabla u(x) \neq 0$, the  subdifferential is single-valued and given by~\cite{Bellettini2002,Gilboa2018}:
\begin{equation}\label{eq.CH2.TVsubgrad}
    \partial J_{TV}(u(x)) = \left\{-\text{div}\left( \frac{\nabla u(x)}{ \lvert\nabla u(x)\rvert_2} \right)\right\}, \quad \nabla u(x) \neq 0.
\end{equation}
If $\nabla u(x) = 0$ for some $x \in \Omega$, then $\partial J_{TV}(u)$ is set-valued. Let $u_0 \in L^2(\Omega)$ be an image. The \emph{TV flow}~\cite{Andreu2001} with Neumann boundary conditions is defined as: 
\begin{align}\label{eq.CH2.TVflow}
    \begin{cases}
    u_t(t;x) = -p(t;x), &p(t;x) \in \partial J_{TV}(u(t)), \,\, t \in (0,\infty), \,\, x \in \Omega,\\
    u(0;x) = u_0(x),  &x \in \Omega,\\
    \partial_{\vec{n}} u(t;x) = 0,  &t \in (0,\infty), \,\, x \in \partial\Omega,
    \end{cases}
\end{align}
The choice of Neumann boundary conditions is standard in imaging and motivated by the fact that it does not create an artificial jump at the boundary~\cite{Catte1992}. The TV flow generates a scale-space representation of an image based on the TV functional. It can be thought of as a diffusion process that enforces smoothing of the image while preserving discontinuities such as edges. An example of the TV flow evolution for an image is shown in Figure \ref{fig.CH2.TVflow_horse_example}. As the TV flow evolves, larger piecewise constant regions in the image will develop until the image becomes constant. The (finite) time at which this happens is called the extinction time. After this time $u_t(t;x) = 0$ and the solution is equal to the mean value of $u_0$~\cite{Andreu2002,Gilboa2018}.

There exists general theory of subgradient flows~\cite{brezis1973ope} of the form 
\begin{align}\label{eq.CH2.general_subgrad_flow}
    \begin{cases}
    u_t(t) + \partial J(u(t)) \ni 0 \quad t > 0,\\
    u(0) = u_0,
    \end{cases}
\end{align}
on a Hilbert space $\mathcal{H}$. The effective domain of the subdifferential is defined as $\text{dom}(\partial J) = \left\{ u \in \mathcal{H}: \partial J(u) \neq \emptyset \right\}$. We have the following existence result:

\newpage \begin{theorem}[Brezis {\cite[Thm. 3.2]{brezis1973ope}}]\label{thm.CH2.TVflow_existence}
Let $\mathcal{H}$ be a Hilbert space, $J:\mathcal{H} \to \mathbb{R} \cup \{+\infty\}$ a convex, lower semi-continuous and proper function, and $\partial J$ its subdifferential. For all $u_0\in\overline{\mathrm{dom}(\partial J)}$ there exists a unique function $u \in C([0,\infty);\mathcal{H})$ such that 
\begin{itemize}
\item $u(0)=u_0$,
\item $u(t) \in \mathrm{dom}(\partial J)  \,\, \forall t > 0$,
\item \eqref{eq.CH2.general_subgrad_flow} holds for almost every $t>0$,
\item $u(t)$ is Lipschitz continuous on $[\delta,\infty)$ for all $\delta > 0$ with
    \begin{align*}
        \lVert u_t \rVert_{L^{\infty}([\delta,\infty);\mathcal{H})} \leq \lVert \partial^0J(v) \rVert_{\mathcal{H}} + \frac{1}{\delta} \lVert u_0 - v \rVert_{\mathcal{H}} \quad \forall v \in \mathrm{dom}(\partial J), \, \forall \delta > 0,
    \end{align*}
    where $\partial^0J(u(t)):=\text{argmin}\left\lbrace\lVert p \rVert_{\mathcal{H}} \colon p\in\partial J(u(t))\right\rbrace\,\forall t>0$,
\item $u$ admits a right derivative for all $t > 0$ and
    \begin{align*}
        -\frac{d^+u}{dt}(t) = \partial^0J(u(t)) \quad \forall t >0,
    \end{align*}
\item $t\mapsto \partial^0J(u(t))$ is right continuous for all $t>0$ and $t\mapsto\lVert\partial^0J(u(t))\rVert_{\mathcal{H}}$ is non-increasing,
\item $t\mapsto J(u(t))$ is convex, non-increasing and Lipschitz continuous on $[\delta,\infty)$ for all $\delta > 0$ and 
    \begin{align*}
        \frac{d^+}{dt}J(u(t)) = -\Big\lVert \frac{d^+u}{dt}(t)\Big\rVert^2_{\mathcal{H}} \quad \forall t> 0.
    \end{align*}
\end{itemize}
\end{theorem}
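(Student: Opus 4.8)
The plan is to prove this via the classical Moreau--Yosida regularisation scheme: one approximates the non-smooth convex functional $J$ by a family of smooth convex functionals whose classical gradient flows are easy to solve, then passes to the limit, with uniqueness and the finer quantitative assertions coming from monotonicity of $\partial J$. For $\lambda>0$ I would introduce the Moreau envelope $J_\lambda(u):=\inf_{w\in\mathcal H}\bigl\{\tfrac{1}{2\lambda}\norm{u-w}^2+J(w)\bigr\}$; standard convex analysis shows $J_\lambda$ is convex, finite on all of $\mathcal H$, and Fr\'echet differentiable with $\tfrac{1}{\lambda}$-Lipschitz gradient $\grad J_\lambda=\tfrac{1}{\lambda}(I-R_\lambda)$, where $R_\lambda:=(I+\lambda\,\partial J)^{-1}$ is the resolvent of the maximal monotone operator $\partial J$. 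I would record the facts used later: $\grad J_\lambda(u)\in\partial J(R_\lambda u)$; $\norm{\grad J_\lambda(u)}$ is non-increasing in $\lambda$ and bounded by $\norm{\partial^0 J(u)}$ whenever $u\in\mathrm{dom}(\partial J)$; $\norm{R_\lambda u-u}\le\lambda\norm{\grad J_\lambda(u)}$; and $R_\lambda u\to u$ as $\lambda\to0$ for every $u\in\overline{\mathrm{dom}(\partial J)}$.

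Since $\grad J_\lambda$ is globally Lipschitz, the regularised Cauchy problem $u_\lambda'+\grad J_\lambda(u_\lambda)=0$, $u_\lambda(0)=u_0$, has a unique global $C^1$ solution by the Cauchy--Lipschitz theorem. The heart of the argument is a handful of a priori estimates, all consequences of monotonicity: testing the equation against $u_\lambda'$ gives the energy identity $\tfrac{d}{dt}J_\lambda(u_\lambda)=-\norm{u_\lambda'}^2$; applying monotonicity of $\grad J_\lambda$ to time difference quotients shows $t\mapsto\norm{u_\lambda'(t)}$ is non-increasing; and subtracting the equations for two parameters $\lambda,\mu$ and using $\grad J_\lambda(u)\in\partial J(R_\lambda u)$ with the resolvent estimate yields, after a Gronwall argument, that $(u_\lambda)$ is Cauchy in $C([0,T];\mathcal H)$ for every $T>0$. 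Letting $u=\lim_{\lambda\to0}u_\lambda$, I would verify $u(0)=u_0$, pass the bound on $\norm{u_\lambda'}$ to the limit to obtain $u\in W^{1,\infty}_{\mathrm{loc}}((0,\infty);\mathcal H)$, and identify the inclusion: from $-u_\lambda'(t)=\grad J_\lambda(u_\lambda(t))\in\partial J(R_\lambda u_\lambda(t))$ with $R_\lambda u_\lambda(t)\to u(t)$ strongly and $u_\lambda'(t)\rightharpoonup u_t(t)$ weakly (for a.e.\ $t$, along a subsequence), the closedness of the graph of $\partial J$ in the strong$\,\times\,$weak topology forces $-u_t(t)\in\partial J(u(t))$, so $u(t)\in\mathrm{dom}(\partial J)$ for a.e.\ $t$; passing the energy identity to the limit gives $\tfrac{d}{dt}J(u(t))=-\norm{u_t(t)}^2$, hence $t\mapsto J(u(t))$ is non-increasing. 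For $u_0$ only in $\overline{\mathrm{dom}(\partial J)}$ I would first run this argument for $u_0\in\mathrm{dom}(\partial J)$ and then extend by density, using the contraction established next.

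Uniqueness and contraction are immediate: for two solutions with data $u_0,v_0$ one has $\tfrac12\tfrac{d}{dt}\norm{u-v}^2=\langle u_t-v_t,u-v\rangle=-\langle p-q,u-v\rangle\le0$ by monotonicity of $\partial J$ (with $p=-u_t$, $q=-v_t$), so $\norm{u(t)-v(t)}\le\norm{u_0-v_0}$ and the solution operators $S(t)\colon u_0\mapsto u(t)$ form a non-expansive semigroup. For the quantitative regularising bound I would compare $u(t)=S(t)u_0$ with the trajectory $S(t)v$ from a regular point $v\in\mathrm{dom}(\partial J)$: combining non-expansiveness, the bound $\norm{\tfrac{d}{dt}S(t)v}\le\norm{\partial^0 J(v)}$ (monotonicity of $\norm{u_t}$, passed to the limit), and a rescaling in time gives $\norm{u_t}_{L^{\infty}([\delta,\infty);\mathcal H)}\le\norm{\partial^0 J(v)}+\tfrac{1}{\delta}\norm{u_0-v}$. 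It then remains to upgrade almost-everywhere differentiability to the stated pointwise statements: since $t\mapsto\norm{u_t(t)}$ is non-increasing and $-u_t\in\partial J(u)$ a.e., the difference quotients $\tfrac{u(t+h)-u(t)}{h}$ are bounded in norm, their weak subsequential limits lie in $-\partial J(u(t))$, and a minimality argument together with the monotonicity of $\norm{u_t}$ shows any such limit has norm at most $\norm{\partial^0 J(u(t))}$, hence equals $-\partial^0 J(u(t))$ and the convergence is strong; this gives $-\tfrac{d^+u}{dt}(t)=\partial^0 J(u(t))$ for all $t>0$. Monotonicity of $t\mapsto\norm{\partial^0 J(u(t))}$ and right continuity of $t\mapsto\partial^0 J(u(t))$ are then inherited from the corresponding properties of $u_t$; $\tfrac{d^+}{dt}J(u(t))=-\norm{\tfrac{d^+u}{dt}(t)}^2$ is the one-sided chain rule; and convexity of $t\mapsto J(u(t))$ follows since its right derivative $-\norm{u_t(t)}^2$ is non-decreasing.

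The step I expect to be the main obstacle is the passage to the limit: securing a priori bounds strong enough to identify the inclusion $-u_t\in\partial J(u)$, which hinges on the strong convergence $R_\lambda u_\lambda\to u$ and on the graph of $\partial J$ being closed in the strong$\,\times\,$weak topology; and, closely related, the final upgrade from almost-everywhere differentiability to right-differentiability everywhere with selection of the minimal section $\partial^0 J(u(t))$, which uses in an essential way the non-expansiveness of $S(t)$ and the minimal-norm characterisation of $\partial^0 J$.
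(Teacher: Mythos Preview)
The paper does not prove this theorem; it is quoted verbatim from Brezis~\cite{brezis1973ope} (with a pointer to the English reproduction in~\cite{bungert2019nonlinear}) and used as a black box to obtain existence, uniqueness and regularity of the TV flow. There is therefore no ``paper's own proof'' to compare against.

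That said, your proposal is precisely the classical argument Brezis himself gives: Moreau--Yosida regularisation, Cauchy--Lipschitz for the approximants, monotonicity-based a priori estimates, Cauchy-in-$\lambda$ convergence, graph-closedness of $\partial J$ to identify the limit inclusion, and finally the upgrade to right-differentiability with minimal-section selection. The outline is correct and the obstacles you flag (strong convergence of $R_\lambda u_\lambda$ and the pointwise upgrade via the minimal-norm characterisation) are exactly the non-routine steps. One small point worth tightening: the Lipschitz bound $\lVert u_t\rVert_{L^\infty([\delta,\infty);\mathcal H)}\le\lVert\partial^0 J(v)\rVert+\tfrac{1}{\delta}\lVert u_0-v\rVert$ is not obtained by a bare ``rescaling in time'' but by integrating the energy identity for $S(\cdot)v$ over $[0,\delta]$, using monotonicity of $\lVert\tfrac{d}{dt}S(t)v\rVert$ to get $\delta\lVert\tfrac{d}{dt}S(\delta)v\rVert^2\le J(v)-J(S(\delta)v)$, and then combining with non-expansiveness; alternatively one compares $u(t)$ with $u(t-h)$ via the semigroup property. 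Either way the ingredients you list suffice.
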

The result has also been reproduced in the English-language, for example, in~\cite{bungert2019nonlinear}. For the TV functional defined on $L^2(\Omega)$~\eqref{eq.CH2.TV}, we have $\overline{\mathrm{dom}(J_{TV})} = L^2(\Omega)$~\cite{Bredies2016}. Since $L^2(\Omega)$ is a Hilbert space, we can apply Theorem~\ref{thm.CH2.TVflow_existence} to the TV flow and obtain existence and uniqueness results for the TV flow solution. We note that the definition of the TV functional on $L^2(\Omega)$ is only possible because $\Omega \subset \mathbb{R}^2$~\cite{ambrosio2000functions,chambolle:1997}. For existence and uniqueness results of the TV flow for $\Omega \subset \mathbb{R}^N$ with general $N\geq1$ we refer to~\cite{Bellettini2002}.

Let $\psi: L^2(\Omega)\times (0,\infty) \to L^2(\Omega)$ denote the solution operator to the TV flow~\eqref{eq.CH2.TVflow} with $\psi(u_0,t) := u(t)$ for $t\in (0,\infty)$ and $u_0 \in L^2(\Omega)$ the initial image in the PDE. For one-homogeneous functionals, such as the total variation, it can be shown that the subdifferential is invariant to contrast changes. That is, $\partial J_{TV}(u) = \partial J_{TV}(cu)$ for any constant $c>0$~\cite{Gilboa2014}. Multiplying the initial image by a constant $c>0$ results in a shift in the time variable:
\begin{align}\label{eq.CH2.one-hom}
    \psi(cu_0,t) = c\psi(u_0,t/c),
\end{align}
as shown in~\cite{Gilboa2013,Gilboa2014}. Let us also mention that because the TV flow only involves local operations, it is translationally equivariant. That is, shifting the initial image will result in the same shift of the image along the flow. In what follows, we will describe some known numerical approaches to solving the TV flow.

\subsection{Numerical Solution of the TV flow}
The numerical solution of the TV flow~\eqref{eq.CH2.TVflow} presents at least two major challenges. Firstly, the flow is non-linear. Secondly, the subgradient in~\eqref{eq.CH2.TVsubgrad} is numerically unstable in regions where $\nabla u(x)$ is small and not uniquely defined in constant regions. The second problem is often tackled by regularising the expression in~\eqref{eq.CH2.TVsubgrad} as follows
\begin{equation}\label{eq.CH2.TVsubgrad_reg}
    -\text{div}\left( \frac{\nabla u(x)}{ \sqrt{\lvert\nabla u(x)\rvert_2^2 +\epsilon^2 }}\right), \quad \epsilon>0.
\end{equation}
A straightforward approach would be to discretise the gradient with finite differences and to use explicit time-stepping schemes such as forward Euler to approximate the solution. However, explicit methods may suffer from stability issues and require small time steps to ensure convergence. One of the first methods that could handle the non-linearity of the TV flow~\eqref{eq.CH2.TVflow} in a stable manner is the semi-implicit lagged diffusivity scheme~\cite{chambolle:1997,chan1999convergence,Vogel1996}
\begin{equation}
    \frac{u(t+dt,x) -u(t,x)}{dt} = \text{div}\left( \frac{\nabla u(t+dt,x)}{ \sqrt{\lvert\nabla u(t,x)\rvert_2^2 +\epsilon^2 }}\right), \quad dt>0,
\end{equation}
which requires solving a linear elliptic equation at each iteration. Another semi-implicit scheme makes use of additive operator splitting and was proposed in~\cite{Weickert1998} to improve stability in time. A different strategy based on an auxiliary flux variable was proposed in~\cite{Chan1999a}, giving better convergence results and establishing a connection to primal-dual methods. Feng et al.~\cite{Feng2003} analyse the flow~\eqref{eq.CH2.TVflow} with a regularised subgradient~\eqref{eq.CH2.TVsubgrad_reg} as a minimal surface flow and propose to approximate the solution using the finite element method of the regularised problem. However, they require a certain degree of regularity and smoothing of the solution. Andreu et al.~\cite{Andreu2001} provide a semigroup analysis of the TV flow. An analysis of discretisation errors and oscillations in the TV flow, also with respect to regularised subgradient methods, was carried out in~\cite{Breuss2006}.  

A fully implicit time-stepping scheme that does not use the regularised subgradient~\eqref{eq.CH2.TVsubgrad_reg} and that is unconditionally stable in the step size was proposed in~\cite{Gilboa2013,Gilboa2014}. An implicit Euler step in the TV flow~\eqref{eq.CH2.TVflow} is equivalent to solving the following variational denoising problem known as the ROF problem~\cite{RUDIN1992}:
\begin{align}\label{eq.CH2.ROF}
    u(t + dt) = \textnormal{argmin}_{v} \lVert u(t) - v \rVert_2^2 + dt \, J_{TV}(v), \quad dt>0.
\end{align}
The Euler-Lagrange equation of this variational problem formulation~\eqref{eq.CH2.ROF} corresponds to an implicit Euler step of the TV flow. Efficient numerical schemes have been developed for solving~\eqref{eq.CH2.ROF}, we refer to the review~\cite{chambolle2016introduction} for details. In this paper, we will refer to solving~\eqref{eq.CH2.ROF} via Chambolle's projection method~\cite{Chambolle2004} as the \emph{model-driven approach}, which we run on the CPU. It works on the dual formulation of the ROF problem as an iterative algorithm that alternates between a gradient step on the dual variable and a projection step onto the relevant constraint set. We will also use a GPU implementation of the primal-dual hybrid gradient (PDHG) algorithm~\cite{Chambolle2010}. It alternates between a proximal step on the dual variable and a proximal step on the primal variable. For more details, we refer to~\cite{Chambolle2010,chambolle2016introduction}.

More recently, Cohen et al.~\cite{Cohen2021,cohen:2021b} proposed applying Koopman's theory of non-linear dynamical systems to the flow~\eqref{eq.CH2.TVflow} to improve computational efficiency. Their results are derived for the one-dimensional TV flow and in two dimensions only for anisotropic TV.

\section{Methods: TVflowNET}\label{sec.TVflowNET}
In the following, we present the TVflowNET, a deep learning approach for approximating the TV flow solution. Before deriving the TVflowNET, we collect a few basic properties of the TV flow and show that the solution operator is locally Lipschitz continuous. 

Brezis~\cite{brezis1973ope} investigated properties of general subgradient flows for subdifferentials of convex, lower semi-continuous and proper functions. We have summerised the results in Theorem~\ref{thm.CH2.TVflow_existence}. It can be easily shown that the theorem holds for the TV flow with the TV functional defined on $L^2(\Omega)$, which is a Hilbert space. Following these results, we get the existence of a unique solution which is continuous in time $\psi(\cdot,t) \in C([0,\infty);L^2(\Omega))$. The solution admits a right derivative in time that is equal to the subgradient of minimal norm, highlighting the importance of the subgradient of minimal norm for the solution of the PDE. In addition, Brezis showed Lipschitz continuity of the solution in $t \in [\delta,\infty)$ for $\delta >0$. 

In the following, we will show that the solution operator is locally Lipschitz continuous with respect to $u_0$ and $t$ jointly. We base our results on two theorems that show Lipschitz continuity in each variable independently. While Theorem~\ref{thm.CH2.TVflow_existence} showed Lipschitzness in time, its Lipschitz constant depends on $u_0$, which leads us to investigate local Lipschitzness in the two components. Lipschitz continuity with respect to the initial condition has been shown by Andreu-Vaillo et al.~\cite{Andreu-Vaillo2004parabolic} for $u_0 \in L^2(\Omega)$. Similar results have been published for $u_0 \in L^2(\mathbb{R}^N)$ \cite{Bellettini2002} and $u_0 \in L^1(\Omega)$ \cite{Andreu2001}. However, we focus on the $L^2(\Omega)$ case. 

\newpage \begin{theorem}[{\cite[Thm. 2.6]{Andreu-Vaillo2004parabolic}}] \label{thm.CH5.Lipschitz_u0}
    Let $u_0 \in L^2(\Omega)$. Then, the unique solution of~\eqref{eq.CH2.TVflow} is Lipschitz continuous in $u_0$. That is, let $\psi(u_0,t), \psi(\hat{u}_0,t)$ be solutions of~\eqref{eq.CH2.TVflow} corresponding to the initial conditions $u_0, \hat{u}_0 \in L^2(\Omega)$, then
    \begin{align*}
        \lVert \psi(u_0,t)- \psi(\hat{u}_0,t))\rVert_{L^2(\Omega)} \leq \lVert u_0 - \hat{u}_0\rVert_{L^2(\Omega)}, 
    \end{align*}
    for any $t \geq 0$.
\end{theorem}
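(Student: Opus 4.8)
The plan is to use the classical energy/monotonicity argument showing that the nonlinear semigroup generated by the subdifferential of a convex, lower semi-continuous, proper functional is non-expansive; for the TV flow this gives the contraction constant~$1$. Fix $u_0,\hat{u}_0\in L^2(\Omega)$ and write $u(t):=\psi(u_0,t)$, $v(t):=\psi(\hat{u}_0,t)$ for the unique solutions guaranteed by Theorem~\ref{thm.CH2.TVflow_existence}, so that $u_t(t)=-p(t)$ and $v_t(t)=-q(t)$ with $p(t)\in\partial J_{TV}(u(t))$ and $q(t)\in\partial J_{TV}(v(t))$ for almost every $t>0$. Set $w(t):=u(t)-v(t)$ and $g(t):=\norm{w(t)}_{L^2(\Omega)}^2$. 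The goal is to show that $g$ is non-increasing, which immediately gives $g(t)\le g(0)=\norm{u_0-\hat{u}_0}_{L^2(\Omega)}^2$.

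First I would record the monotonicity of the subdifferential: adding the two defining subgradient inequalities $J_{TV}(v(t))\ge J_{TV}(u(t))+\langle p(t),v(t)-u(t)\rangle$ and $J_{TV}(u(t))\ge J_{TV}(v(t))+\langle q(t),u(t)-v(t)\rangle$ yields $\langle p(t)-q(t),\,u(t)-v(t)\rangle\ge 0$. Next, for any $\delta>0$, Theorem~\ref{thm.CH2.TVflow_existence} guarantees that $u$ and $v$ are Lipschitz continuous on $[\delta,\infty)$, hence $w$ is Lipschitz and bounded there, so $g$ is locally Lipschitz, therefore absolutely continuous and differentiable almost everywhere on $(0,\infty)$. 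At a.e.\ $t>0$ the chain rule for absolutely continuous Hilbert-space-valued curves gives $g'(t)=2\langle w_t(t),w(t)\rangle=-2\langle p(t)-q(t),\,u(t)-v(t)\rangle\le 0$. Thus $g$ is non-increasing on $[\delta,\infty)$ for every $\delta>0$, hence on $(0,\infty)$; and since $u,v\in C([0,\infty);L^2(\Omega))$, $g$ is continuous at $t=0$ with $g(0)=\norm{u_0-\hat{u}_0}_{L^2(\Omega)}^2$. Combining, $g(t)\le g(0)$ for all $t\ge 0$, which is exactly the claimed estimate.

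The routine parts — the two subgradient inequalities and the final integration — are immediate; the step that requires care is the justification that $t\mapsto g(t)$ is regular enough to be differentiated and that its a.e.\ derivative equals $2\langle w_t(t),w(t)\rangle$. This rests on the Lipschitz-in-time statement of Theorem~\ref{thm.CH2.TVflow_existence} together with the elementary fact that an absolutely continuous curve $t\mapsto w(t)$ into a Hilbert space satisfies $\tfrac{d}{dt}\norm{w(t)}^2=2\langle w'(t),w(t)\rangle$ almost everywhere. If one prefers to avoid differentiation altogether, the same estimate can be run on the implicit (resolvent) time discretisation of~\eqref{eq.CH2.general_subgrad_flow} — using firm non-expansiveness of the resolvent $(\mathrm{Id}+dt\,\partial J_{TV})^{-1}$ — and then passed to the limit as in Brezis's construction of the semigroup. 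I would also emphasise that the argument uses nothing about $J_{TV}$ beyond convexity, so the contraction constant is $1$ with no dependence on $\Omega$ or $t$, matching the statement.
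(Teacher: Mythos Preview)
Your argument is correct: the monotonicity of $\partial J_{TV}$ together with the Lipschitz-in-time regularity from Theorem~\ref{thm.CH2.TVflow_existence} yields $\frac{d}{dt}\lVert u(t)-v(t)\rVert_{L^2}^2\le 0$ a.e., and continuity at $t=0$ closes the estimate. The care you take with the differentiability of $g$ is appropriate, and the alternative via resolvent non-expansiveness is a valid fallback.

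Note, however, that the paper does not supply its own proof of this statement; it is quoted directly from~\cite[Thm.~2.6]{Andreu-Vaillo2004parabolic} and used as a black box in the proof of Corollary~\ref{col.CH5.Lipschitz_jointly}. What you have written is essentially the classical Hilbert-space argument (as in Brezis~\cite{brezis1973ope}) that underlies that cited result, so there is no discrepancy in approach to report --- you have simply filled in the standard proof the paper chose to cite rather than reproduce.
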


Based on these two theorems, we can formulate a corollary about the local Lipschitz continuity of $\psi(u_0,t)$ with respect to $u_0$ and $t$ jointly. We show local Lipschitzness with respect to the 1-norm on the Cartesian product $L^2(\Omega) \times (0,\infty)$, i.e., $\lVert (u_0,t) \rVert_{1} := \lVert u_0 \rVert_{L^2(\Omega)} + \lvert t \rvert$. Because all norms on $\mathbb{R}^2$ are equivalent, this implies, of course, local Lipschitzness with respect to all other norms on $L^2(\Omega) \times (0,\infty)$ (i.e. combinations of $\lVert u \rVert_{L^2(\Omega)}$ and $\lvert t \rvert$).
\begin{corollary}\label{col.CH5.Lipschitz_jointly} 
    The solution operator $\psi(u_0,t)$ to the TV flow is locally Lipschitz continuous jointly in $(u_0,t)$, i.e. it is locally Lipschitz as an operator $L^2(\Omega) \times (0,\infty) \mapsto L^2(\Omega)$.
\end{corollary}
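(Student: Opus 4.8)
The plan is to reduce the joint statement to the two one-variable estimates already at hand — $1$-Lipschitzness in $u_0$ (Theorem~\ref{thm.CH5.Lipschitz_u0}) and Lipschitzness in $t$ on $[\delta,\infty)$ (Theorem~\ref{thm.CH2.TVflow_existence}) — glued together by a triangle inequality, the only delicate point being that Brezis's temporal Lipschitz constant depends on the initial datum and degenerates as $t\to 0$. Concretely, I would fix an arbitrary base point $(u_0^\ast,t^\ast)\in L^2(\Omega)\times(0,\infty)$, set $\delta:=t^\ast/2>0$, choose a radius $0<r<t^\ast/2$, and work on the open neighbourhood $U:=B_r(u_0^\ast)\times(t^\ast-r,t^\ast+r)$, which lies in $L^2(\Omega)\times(0,\infty)$ and on which every time coordinate exceeds $\delta$. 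For $(u_0,t),(\hat u_0,\hat t)\in U$, split
\begin{align*}
\norm{\psi(u_0,t)-\psi(\hat u_0,\hat t)}_{L^2(\Omega)}
&\le \norm{\psi(u_0,t)-\psi(\hat u_0,t)}_{L^2(\Omega)}\\
&\quad {}+ \norm{\psi(\hat u_0,t)-\psi(\hat u_0,\hat t)}_{L^2(\Omega)},
\end{align*}
and bound the first term by $\norm{u_0-\hat u_0}_{L^2(\Omega)}$ via Theorem~\ref{thm.CH5.Lipschitz_u0}.

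For the temporal term, I would apply the estimate $\norm{u_t}_{L^\infty([\delta,\infty);L^2(\Omega))}\le\norm{\partial^0 J_{TV}(v)}_{L^2(\Omega)}+\tfrac1\delta\norm{\hat u_0-v}_{L^2(\Omega)}$ of Theorem~\ref{thm.CH2.TVflow_existence} to the flow started at $\hat u_0$, and make the right-hand side uniform over $\hat u_0\in B_r(u_0^\ast)$ by fixing the free parameter $v$ once and for all. A constant function — in particular $v\equiv 0$ — lies in $\mathrm{dom}(\partial J_{TV})$ and has $0\in\partial J_{TV}(v)$, hence $\partial^0 J_{TV}(v)=0$; with $v=0$, $\delta=t^\ast/2$ and the crude bound $\norm{\hat u_0}_{L^2(\Omega)}\le\norm{u_0^\ast}_{L^2(\Omega)}+r$ I get
\begin{align*}
\norm{u_t}_{L^\infty([t^\ast/2,\infty);L^2(\Omega))}\le\frac{2}{t^\ast}\bigl(\norm{u_0^\ast}_{L^2(\Omega)}+r\bigr)=:L,
\end{align*}
which is independent of the chosen point in $U$. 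Since $t\mapsto\psi(\hat u_0,t)$ is Lipschitz on $[\delta,\infty)$ by the same theorem, with Lipschitz constant at most this $L$, the second term is bounded by $L\lvert t-\hat t\rvert$.

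Putting the two bounds together gives, for all $(u_0,t),(\hat u_0,\hat t)\in U$,
\begin{align*}
\norm{\psi(u_0,t)-\psi(\hat u_0,\hat t)}_{L^2(\Omega)}
\le\norm{u_0-\hat u_0}_{L^2(\Omega)}+L\lvert t-\hat t\rvert
\le\max\{1,L\}\,\norm{(u_0,t)-(\hat u_0,\hat t)}_1,
\end{align*}
i.e. Lipschitz continuity on $U$ with constant $\max\{1,L\}$; as $(u_0^\ast,t^\ast)$ was arbitrary and norm equivalence on the $\R^2$ factor transfers this to any product norm, the corollary follows. The main obstacle — indeed essentially the only step that is not bookkeeping — is the one flagged at the outset: the constant in Theorem~\ref{thm.CH2.TVflow_existence} both depends on the initial datum and blows up as $t\to 0$, so one must (i) stay in a region bounded away from $t=0$, which is precisely why only \emph{local} Lipschitzness on $(0,\infty)$ can be expected, and (ii) neutralise the $\hat u_0$-dependence by a single fixed choice of $v$ combined with the triangle inequality inside $B_r(u_0^\ast)$. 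Everything else reduces to the triangle inequality and integrating the velocity bound.
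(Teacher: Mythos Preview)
Your proof is correct and follows essentially the same route as the paper's: a triangle-inequality split into an initial-datum part (handled by Theorem~\ref{thm.CH5.Lipschitz_u0}) and a temporal part (handled by Theorem~\ref{thm.CH2.TVflow_existence} with the choice $v=0$), with the $\hat u_0$-dependence of Brezis's constant absorbed via $\norm{\hat u_0}\le\norm{u_0^\ast}+r$. The only cosmetic difference is your parametrisation $\delta=t^\ast/2$, $r<t^\ast/2$ versus the paper's $\delta=t-\eps$, $\eps<t$, leading to the same final constant up to relabelling.
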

\begin{proof}
    Theorem~\ref{thm.CH2.TVflow_existence} states that for every $u_0 \in L^2(\Omega)$, the solution $\psi(u_0,t)$ is Lipschitz continuous on $[\delta,\infty)$ $\forall \delta > 0$ with a Lipschitz constant 
    \begin{align*}
        \lVert u_t \rVert_{L^{\infty}([\delta,\infty);L^2(\Omega))} \leq \lVert \partial^0J(v) \rVert_{L^2(\Omega)} + \frac{1}{\delta} \lVert u_0 - v \rVert_{L^2(\Omega)} \quad \forall v \in \mathrm{dom}(\partial J_{TV}), \, \forall \delta > 0.
    \end{align*}
    Taking $v = 0 \in \text{dom}(J_{TV})$ and $0 \in \partial J_{TV}(0)$, we have $\lVert u_t \rVert_{L^{\infty}(\delta,\infty;L^2(\Omega))} \leq \frac{\lVert u_0 \rVert_{L^2(\Omega)}}{\delta}$. Note that the Lipschitz constant is dependent on $u_0$. Similarly, we know that the solution to the TV flow is Lipschitz continuous in the initial condition with Lipschitz constant 1 following Theorem~\ref{thm.CH5.Lipschitz_u0}. 

    Let $u_0\in L^2(\Omega)$, $t\in (0, \infty)$, $0 < \epsilon < t$ and 
    \begin{align*}
        U_{\epsilon} := \left\{(v,\tau): \lVert v - u_0 \rVert_{L^2(\Omega)} < \epsilon, \lvert \tau - t \rvert < \epsilon \right\}.
    \end{align*}
    Then for all $(\bar{u}_0,\bar{t}), \, (\hat{u}_0,\hat{t}) \in U_{\epsilon}$ we have:
    \begin{align}\label{eq.CH5.corollary_proof}
        \begin{split}
        \lVert \psi(\bar{u}_0,\bar{t})- \psi(\hat{u}_0,\hat{t})\rVert_{L^2(\Omega)} &\leq \lVert \psi(\bar{u}_0,\bar{t})- \psi(\hat{u}_0,\bar{t})\rVert_{L^2(\Omega)} + \lVert \psi(\hat{u}_0,\bar{t})- \psi(\hat{u}_0,\hat{t})\rVert_{L^2(\Omega)} \\
        &\leq \lVert \bar{u}_0 - \hat{u}_0\rVert_{L^2(\Omega)} + \frac{\lVert \hat{u}_0 \rVert_{L^2(\Omega)}}{t-\epsilon} \lvert \bar{t}- \hat{t}\rvert\\
        &\leq \lVert \bar{u}_0 - \hat{u}_0\rVert_{L^2(\Omega)} + \frac{\lVert u_0 \rVert_{L^2(\Omega)} + \epsilon }{t-\epsilon} \lvert \bar{t}- \hat{t}\rvert,
        \end{split}
    \end{align}
    since $\lVert \hat{u}_0 \rVert_{L^2(\Omega)} \leq \lVert \hat{u}_0 - u_0 \rVert_{L^2(\Omega)} + \lVert u_0 \rVert_{L^2(\Omega)} \leq \epsilon + \lVert u_0 \rVert_{L^2(\Omega)}$. 
    Hence, we get
    \begin{align*}
        \lVert \psi(\bar{u}_0,\bar{t})- \psi(\hat{u}_0,\hat{t})\rVert_{L^2(\Omega)} \leq C_{\epsilon}(u_0,t)\lVert (\bar{u}_0,\bar{t}) - (\hat{u}_0,\hat{t}) \rVert_{1},
    \end{align*}
    with $C_{\epsilon}(u_0,t) = \max\left\{\frac{\lVert u_0 \rVert_{L^2(\Omega)} + \epsilon}{t-\epsilon}, 1 \right\}.$
\end{proof}

Now that we have collected some of the basic properties of the TV flow and its solution, we derive our numerical scheme. We have discussed numerical solutions of the TV flow using classical methods above. For the classical model-driven approach, the implicit Euler method requires solving the ROF problem~\eqref{eq.CH2.ROF} for each time step sequentially. Therefore, even if one is only interested in the result of the flow at time $T$, the entire flow up to time $T$ has to be evolved -- this can be computationally expensive and slow even with state-of-the-art optimisation approaches such as PDHG. In contrast, the inference speeds of trained deep learning approaches tend to be fast. Given an initial image $u_0$ and a time instance $t$, we propose to approximate the solution to the TV flow $u(t)$ via a neural network, the TVFlowNET. 

Our approach for solving the TV flow draws inspiration from PINNs~\cite{Raissi2019} which approximate the PDE solution through a neural network. We are particularly interested in the PINN methodology because it is unsupervised and does not require ground truth data. However, it requires modifications to solve our PDE of interest effectively. Let us summarise the building blocks to PINNs before we go into detail about the method itself. The first part comprises the design of the loss functional. Traditionally, this is the norm of the residual of the PDE and its initial and boundary conditions discretised on a random set of collocation points in the space-time domain. The architecture forms the second part of PINNs. The basic PINN version allows for broad neural network designs. While PINNs have been successfully used to approximate the solutions of various PDEs~\cite{Cai2021,Pang2019,Raissi2019}, they are not directly applicable to the TV flow for the following two reasons: Firstly, a PINN learns functions instead of operators, that is, it is trained for a fixed initial condition and requires retraining if initial or boundary conditions change. In contrast, we seek a neural network approximating the TV flow solution for any initial image. To overcome this, we use our loss functional to train a neural network that maps a given initial image, $u_0$, and a point in time, $t$, to the TV flow solution $u(t)$ in an unsupervised fashion. In defining the learned solution operator to be dependent on the initial condition, we avoid the need to retrain the neural network for new initial images. Instead, we get a solution operator for all possible images. Secondly, the explicit expression of the subgradient as $-\textnormal{div}\left( \nabla u / \lvert \nabla u \rvert \right)$ in the PDE residual is not feasible due to the singularities. We circumvent this challenge by deriving a new loss functional to learn the solution $u$ simultaneously with the subgradient $p \in J_{TV}(u)$. This requires a more precise characterisation of the subgradient. We do not have to regularise the TV subgradient with a small $\epsilon$ and avoid errors related to such smoothing. 

In the following, we present the details of the TVflowNET to approximate the TV flow solution. We introduce the loss functional and three different network architectures we train for the TVflowNET model. The performance of all three architectures will be evaluated and compared in the results section.

\subsection{Loss functional}
The first step in our deep learning approach is to build the loss functional. The functional will be minimised over both the TV flow solution and the subgradient of the TV functional simultaneously. To this end, we need a characterisation of the subdifferential $\partial J_{TV}(u)$. We follow the work by Bredies and Holler \cite{Bredies2016}:
\begin{theorem}[{\cite[Prop. 7]{Bredies2016}}] 
    \label{thm.CH5.subgrads}
    Let $\Omega \subset \mathbb{R}^2$, $u \in L^2(\Omega), p \in L^2(\Omega)$ and $J_{TV} \colon L^2(\Omega) \to \mathbb{R}\cup\{+\infty\}$ be as in \eqref{eq.CH2.TV}. Then $p \in \partial J_{TV}(u)$ if and only if
    \begin{align}
    \begin{cases}
        u \in \textnormal{BV}(\Omega) \,\, \text{and}\\[1pt]
        \exists \, \varphi \in \overline{C_0^{\infty}(\Omega, \mathbb{R}^2)}^{\lVert \cdot \rVert_{W^2(\textnormal{div})}} \,\, \text{with} \,\, \lVert \varphi \rVert_{L^{\infty}} \leq 1 \,\, \text{such that} \,\, p = -\textnormal{div} \varphi, \,\, \text{and} \\[1pt]
        J_{TV}(u) = - \langle u, \text{div} \, \varphi \rangle_{L^2(\Omega)}
    \end{cases}
    \end{align}
where $\lVert \cdot \rVert_{W^2(\textnormal{div})} := \lVert \cdot \rVert_{L^2}^2 + \lVert \textnormal{div}( \cdot) \rVert_{L^2}^2$ and $\lVert \varphi \rVert_{L^{\infty}} := \sup_{x \in \Omega} \lVert \varphi(x) \rVert_{L^2(\mathbb{R}^2)}$.
\end{theorem}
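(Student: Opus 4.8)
The plan is to obtain the characterisation from convex duality, specifically the Fenchel--Young equality, combined with an explicit identification of the convex conjugate $J_{TV}^*$.

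First I would note that, directly from its definition, $J_{TV}$ is the support function of the set
\[
    K := \bigl\{-\textnormal{div}\,\varphi : \varphi \in C_0^\infty(\Omega;\mathbb{R}^2),\ \lVert\varphi\rVert_{L^\infty}\leq 1\bigr\} \subset L^2(\Omega),
\]
i.e.\ $J_{TV}(u)=\sup_{v\in K}\langle u,v\rangle_{L^2(\Omega)}$, with the value $+\infty$ occurring exactly when this supremum is not finite. The set $K$ is convex, being the image of the $\lVert\cdot\rVert_{L^\infty}$-unit ball of $C_0^\infty(\Omega;\mathbb{R}^2)$ under the linear map $\varphi\mapsto-\textnormal{div}\,\varphi$. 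Since the convex conjugate of a support function is the indicator function of the closed convex hull of the underlying set, and since the weak and strong $L^2$-closures of a convex set coincide, we get $J_{TV}^*=\iota_{\overline K}$, the indicator of the $L^2(\Omega)$-closure $\overline K$ of $K$. I would also recall that $\textnormal{dom}(J_{TV})=\textnormal{BV}(\Omega)$: finiteness of $J_{TV}(u)$ is by definition the statement that $u\in\textnormal{BV}(\Omega)$, and the embedding $\textnormal{BV}(\Omega)\hookrightarrow L^2(\Omega)$, valid precisely because $\Omega\subset\mathbb{R}^2$, makes this compatible with the ambient space $L^2(\Omega)$.

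With these two facts in hand the theorem is essentially a rephrasing of the Fenchel--Young identity. For proper, convex, lower semi-continuous $J_{TV}$ one has $p\in\partial J_{TV}(u)$ if and only if $J_{TV}(u)+J_{TV}^*(p)=\langle u,p\rangle_{L^2(\Omega)}$; using $J_{TV}^*=\iota_{\overline K}$ this is equivalent to the conjunction of (i) $J_{TV}(u)<\infty$, i.e.\ $u\in\textnormal{BV}(\Omega)$; (ii) $p\in\overline K$, i.e.\ $p$ admits a representation $p=-\textnormal{div}\,\varphi$ with $\varphi$ in the relevant closure and $\lVert\varphi\rVert_{L^\infty}\leq 1$; and (iii) $J_{TV}(u)=\langle u,p\rangle_{L^2(\Omega)}=-\langle u,\textnormal{div}\,\varphi\rangle_{L^2(\Omega)}$. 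Both directions of the equivalence then follow by reading this chain forwards and backwards. The only remaining point needed to match the statement verbatim is the concrete description
\[
    \overline K = \bigl\{-\textnormal{div}\,\varphi : \varphi\in\overline{C_0^\infty(\Omega;\mathbb{R}^2)}^{\lVert\cdot\rVert_{W^2(\textnormal{div})}},\ \lVert\varphi\rVert_{L^\infty}\leq 1\bigr\}.
\]

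Establishing this last identity is where the real work lies, and I expect it to be the main obstacle. One inclusion requires approximating a vector field $\varphi$ from the $W^2(\textnormal{div})$-closure by smooth, compactly supported fields whose divergences converge in $L^2$ while keeping the pointwise bound $\lVert\cdot\rVert_{L^\infty}\leq 1$ intact; this is delicate because the natural smooth approximants (mollifications, radial truncations onto the unit ball) need not respect the sup-norm constraint, and the constraint has to be reinstated without destroying $W^2(\textnormal{div})$-convergence of the divergences. The reverse inclusion amounts to showing the right-hand set is $L^2$-closed: from an $L^2$-convergent sequence $-\textnormal{div}\,\varphi_n\to p$ I would extract a weakly convergent subsequence $\varphi_n\rightharpoonup\varphi$ in $L^2$ (the $\varphi_n$ being uniformly bounded in $L^2$ by $\lvert\Omega\rvert^{1/2}$), use weak closedness of the $L^\infty$-unit ball to keep $\lVert\varphi\rVert_{L^\infty}\leq 1$, identify $\textnormal{div}\,\varphi=-p$ distributionally, and finally argue that $\varphi$ still lies in the $W^2(\textnormal{div})$-closure. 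Apart from this bookkeeping around the $L^\infty$ constraint, the argument is pure convex analysis.
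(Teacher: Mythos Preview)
The paper does not prove this theorem at all: it is quoted as \cite[Prop.~7]{Bredies2016} and used as a black box to reformulate the TV flow for the purpose of building the loss functional~\eqref{eq.CH5.loss}. There is therefore no in-paper argument to compare against.

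On its own merits your outline is the standard route and is essentially how the result is obtained in the original reference: recognise $J_{TV}$ as the support function of $K$, compute $J_{TV}^*=\iota_{\overline K}$, and read off the subdifferential from the Fenchel--Young equality. You have also correctly isolated the only substantive step, namely the identification of $\overline K^{L^2}$ with the set $\{-\textnormal{div}\,\varphi:\varphi\in\overline{C_0^\infty}^{\,W^2(\textnormal{div})},\ \lVert\varphi\rVert_{L^\infty}\le1\}$. Your sketch of the two inclusions is accurate; the $L^\infty$ constraint under mollification is handled in Bredies--Holler by a radial truncation $\varphi\mapsto\varphi/\max(1,\lvert\varphi\rvert)$, which is Lipschitz and hence stable in $L^2$, combined with a separate argument that the truncated field still has divergence close in $L^2$. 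None of this is needed for the present paper, which only consumes the statement.
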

It should be noted that we consider the TV flow for data $u(t) \in L^2(\Omega)$ for all $t>0$. However, Theorem \ref{thm.CH5.subgrads} holds true for the more general case of $u \in L^q(\Omega)$ with $1 < q \leq \frac{d}{d-1}$ and $\Omega \in \mathbb{R}^d$. A characterisation of the subdifferential for $u \in L^1(\Omega)$ with similar results can be found in \cite{Andreu-Vaillo2004parabolic}. Based on the characterisation of the subdifferential for $u \in L^2(\Omega)$ as described in Theorem \ref{thm.CH5.subgrads}, we can equivalently rewrite the TV flow \eqref{eq.CH2.TVflow} with $p(t) = -\textnormal{div} \varphi(t)$ for $t \in (0,\infty)$ as follows:
\begin{align}\label{eq.CH5.TVflow_new}
    \begin{cases}
    u_t(t;x) = \textnormal{div} \, \varphi(t;x), & \textnormal{in} \,\, (0,\infty) \times \Omega\\[1pt]
    \lVert \varphi(t;\cdot) \rVert_{L^{\infty}} \leq 1,   &t \in (0,\infty) \\[1pt]
    J_{TV}(u(t;\cdot)) = - \langle u(t;\cdot), \text{div} \, \varphi(t;\cdot) \rangle_{L^2(\Omega)} & t \in (0,\infty) \\[1pt]
    u(0;x) = u_0(x),  &x \in \Omega\\[1pt]
    \partial_{\vec{n}} u(t;x) = 0,  &t \in (0,\infty), \,\, x \in \partial\Omega \\[1pt]
    \varphi (t;x) = 0 &t \in [0,\infty), \,\, x \in \partial\Omega\\[1pt]
    \partial_{\vec{n}} \varphi(t;x) = 0 &t \in [0,\infty), \,\,  x \in \partial\Omega
    \end{cases}
\end{align}
The boundary conditions for $\varphi$ reflect the fact that $\varphi(t) \in \overline{C_0^{\infty}(\Omega, \mathbb{R}^2)}^{\lVert \cdot \rVert_{W^2(\textnormal{div})}}$ and therefore both $\varphi$ and its gradient are zero at the boundary. As $\varphi$ is vector-valued, $\partial_{\vec{n}} \varphi(t;x) = 0$ is to be understood componentwise. The equivalent form \eqref{eq.CH5.TVflow_new} of the TV flow allows us to circumvent the challenges and errors related to the approximation of the subgradient, such as those arising from regularisation with $\epsilon$ in~\cite{chambolle:1997,Vogel1996}. Instead, we additionally learn the diffusivity term $\varphi$ as characterised in \eqref{eq.CH5.TVflow_new} via our neural network.

We will now formulate the loss functional. Our approach follows the idea of PINNs that minimise the residual of the PDE in the relevant norm. Given an initial image $u_0 \in L^2(\Omega)$ and a time instance $t \in [0, T]$ with $T>0$, let us denote $(u_{\theta}, \varphi_{\theta})$ as the TVflowNET output that approximates both the solution to the TV flow $u$ and the corresponding functional $\varphi$ in the subgradient $\text{div} \, \varphi$ of the TV functional. We can then define the loss functional as the residuals of~\eqref{eq.CH5.TVflow_new}, its initial conditions, and the minimum norm condition as measured by the $L^2$-norm:
\begin{align} \label{eq.CH5.loss}
    \begin{split}
    \mathcal{L}(\theta) = &\Big\lVert \frac{\partial u_{\theta}}{\partial t} - \text{div}\, \varphi_{\theta} \Big\rVert_{L^2([0,T]\times\Omega)}^2 + \alpha_1 \lVert \langle u_{\theta}, \text{div}\, \varphi_{\theta}\rangle + J_{TV}(u_{\theta})\rVert_{L^2([0,T])}^2 \\
    &+ \int_{\Omega}\int_{0}^T(\lVert \varphi_{\theta}(t;x) \rVert_{L^2(\mathbb{R}^2)} -1)_{+}\, \mathrm{d}t \, \mathrm{d}x + \lVert u_0 - u_{\theta}(0) \rVert_{L^2(\Omega)}^2 + \alpha_2 \lVert \text{div}\, \varphi_{\theta} \rVert_{L^2([0,T]\times\Omega)}^2,
    \end{split}
\end{align}
with $\alpha_1,\alpha_2 > 0$ constants, weighting the influence of the respective terms. The first three terms describe the residual of the TV flow~\eqref{eq.CH5.TVflow_new}, while the fourth loss term enforces the initial condition of the PDE. In addition to~\eqref{eq.CH5.TVflow_new}, we enforce the subgradient $\text{div} \varphi$ to be of minimal norm, reflected in the last term of the loss. Following the existence and uniqueness results in Theorem~\ref{thm.CH2.TVflow_existence}, it can be shown that the flow chooses a solution with subgradient of minimal norm~\cite{brezis1973ope,bungert2019nonlinear}. Overall, the loss describes how well the network outputs $( u_{\theta}, \varphi_{\theta})$ solve the TV flow.

One caveat of this learning approach is that the optimisation is typically gradient-based. This means we need to take the gradients with respect to the network parameters $\theta$ and inevitably also differentiate the TV functional using automatic differentiation.  Therefore, we need to add a small smoothing term $\epsilon >0$ to the TV functional to make it differentiable around $0$. We use the following smoothed TV functional:
\begin{align*}
	J_{TV_{\epsilon}}(u) = \int_{\Omega} \sqrt{\lVert \nabla u \rVert_{2}^2 + \epsilon}  \, \mathrm{d}x.
\end{align*}
We emphasise that the smoothing is only used for the TV functional in the term $\lVert \langle u_{\theta}, \text{div}\, \varphi_{\theta}\rangle + J_{TV}(u_{\theta})\rVert_{L^2([0,T])}^2$. We are not smoothing the subgradient $\text{div}\, \varphi$, but learning the diffusivity term. We confirmed numerically, that the learned diffusivity term is indeed not the diffusivity term in the smoothed subgradient, i.e., $\varphi \neq \nabla u /\sqrt{\lVert \nabla u \rVert_{2}^2 + \epsilon}$. The PSNR of those two terms averaged over all trained architectures is $10.98$dB and the SSIM is $0.107$, both very low especially in comparison to the similarities we report in the results section for learning the solution.

We enforce the spatial boundary conditions through the numerical derivation of the spatial derivatives via finite differences and the definition of the boundary therein. The divergence is the negative adjoint of the gradient. We ensure that the discretised gradient and divergence are adjoint by choosing forward differences for one and backward differences for the other and adjusting the boundary conditions of the differences accordingly (cf.~\cite{Chambolle2004}). We implement the Dirichlet condition on $\varphi$ as a hard constraint by setting the spatial boundary values of the network output $\varphi_{\theta}$ to 0. Lastly, we can evaluate the temporal derivative of $u_{\theta}$ via automatic differentiation. As time is an input to the neural network, we can use automatic differentiation, which essentially amounts to applying the chain rule multiple times through the network architecture. This means that no discretisation is needed to calculate the temporal derivative. Instead, we randomly sample in the time interval.

\subsubsection{Joint space-time optimisation}
We have designed the loss functional~\eqref{eq.CH5.loss} to optimise neural network parameters in an unsupervised learning approach. However, with a few modifications, the same loss can be used in a variational approach without the deep learning framework to approximate the TV flow solution. We can minimise the loss functional jointly in space and time on a fixed grid and given an initialisation of $u$ and $\varphi$. This joint space-time optimisation approach forms a baseline for the designed loss. 

Two main modifications facilitate the direct optimisation of the loss without using neural networks. Firstly, we discretise the temporal derivative with finite differences, as we cannot use automatic differentiation in this case. Secondly, we need to initialise the functions $u$ and $\varphi$ that the loss is minimised over. We initialise $u$ with the initial image $u_{ini} = u_0$ as evolving the TV flow through time will render increasingly piecewise constant versions of the initial image. The diffusivity term $\varphi$ is initialised with a regularised version $\varphi_{ini} = \nabla u / \left(\sqrt{\lVert\nabla u(t,x)\rVert_{L^2(\Omega)}^2 +\epsilon^2} \right)$.

The loss is optimised on the space and time grid simultaneously. That is, we do not run the optimisation for each time step separately as in the ROF problem~\eqref{eq.CH2.ROF}, but for a fixed number of pixels and time points. The optimisation is run for each initial image $u_0$ separately, and we expect this joint space-time optimisation to be computationally slow, especially compared to the TVflowNET approach. However, it does not serve the goal of a computational speedup on the TV flow solution but is used for comparison purposes to disentangle the effect of the variational problem and the neural network parametrisation.

\subsection{Architecture Design}
The second building block of our learning approach is the neural network architecture. The choice influences the ability of the network to approximate the PDE solution. To accommodate learning a solution operator for different initial data instead of function learning as in the vanilla PINNs method, the initial image $u_0$ will be part of the input to the neural network alongside the temporal variable $t$. We are looking at images; therefore, the grid in space is determined by the number of pixels of the image and, as opposed to PINNs, the spatial coordinates are not an input variable. In the following, we consider three different TVflowNET architecture designs that we will compare in their performance. 

\subsubsection{TVflowNET architectures}
For the design of the TVflowNET, we introduce three different neural network architectures. We consider what we call a Semi-ResNet, a U-Net and a learned gradient descent (LGD) approach. These neural networks differ in their complexity and the functions they employ in their network layers. In the results section, we will present a detailed comparison of the three networks based on their performance. 

Let us first consider the input and output format of the TVflowNET, which is the same for all network architectures. The input to our neural network is the initial image $u_0$ and the time instance $t$. The output represents the corresponding TV flow solution $u_{\theta}(t)$ and the diffusivity term $\varphi_{\theta}(t)$. We only consider convolutional neural networks (CNNs) in the design of the TVflowNET to enforce translational equivariance. CNNs have also been shown to perform well in imaging tasks. However, the input to a CNN needs to be of a form that enables convolution over the input. We, therefore, blow up the dimension of the time instance by copying the value of $t$ into a matrix to match the size of the initial image:
\begin{align*}
    \tau = \begin{pmatrix}
        t & \cdots & t\\
        \vdots & \ddots & \vdots \\
        t & \cdots & t
    \end{pmatrix} \in \mathbb{R}^{n \times m},
\end{align*} 
where $n \times m$ denotes the size of the initial image $u_0$. Subsequently, we concatenate the resulting matrix with $u_0$ to form the input $(u_0,\tau)$. Let us now describe the three network architectures in more detail.

For the first network architecture, we follow the TVspecNET~\cite{Grossmann2020} and DnCNN~\cite{Zhang2017} approaches that consider a ResNet~\cite{He2016}, a non-contracting feed-forward CNN that typically consist of network layers with skip connections. In this case, we use one skip connection: from the initial image $u_0$ to the $u_{\theta}(t)$ part of the output. In that, the network does not need to learn the entire image but only the residual of the solution, which is an increasingly piecewise constant version of the initial image with increasing time. The second output $\varphi_{\theta}(t)$ of the network uses no residual connection and, hence, we call the network a Semi-ResNet. We use eight sequential convolutional layers with 16 channels each. We chose these parameters heuristically to give the best results and balance network depth and computational cost. As activation functions, we use a combination of the ReLU and softplus function. While ReLU is more popular in image analysis applications~\cite{Grossmann2020,Zhang2017,Zheng2015}, we found it does not perform as well when used on its own. As a smoothed version of the ReLU function, the softplus function is differentiable and more stable~\cite{Zheng2015}. Given that the optimisation process of our loss functional involves second derivatives, this may explain the better performance of softplus compared to ReLU. However, softplus promotes images with smoothed edges; adding a single ReLU function in the neural network improved our results. We have done an extensive comparison of the use of a different number of ReLU versus softplus activations in this particular network. Numerical experiments showed that all networks containing at least one softplus activation perform similarly (PSNR standard deviation of 0.099). However, only having ReLU activations reduces performance by 2.39 PSNR points compared to the networks that use at least one softplus activation. With the highest attained performance, we consider the Semi-ResNet with ReLU in the first layer and softplus in the subsequent ones. 

The second architecture that we are considering is a U-Net~\cite{Ronneberger2015}. The U-Net architecture has been highly successful in training multiple imaging tasks. The U-Net has an encoder-decoder type architecture that uses down- and upsampling operations between its layers. In this way, the network uses pixel information from a larger neighbourhood and encodes hierarchical features. The idea is to learn fine-grid features in the higher levels and coarser features in the lower levels. In our case, the U-Net consists of three encoder and three decoder blocks, each equipped with softplus activation. Similarly to the Semi-ResNet architecture, we add a skip connection between the input image and the output solution $u_{\theta}(t)$. No residual connections are used for the diffusivity term $\varphi_{\theta}(t)$. 

\begin{algorithm}[t]

\begin{algorithmic}[1]
\caption{Learned Gradient Descent}\label{alg.CH5.LGD}
\vspace{2pt}
\State \textbf{Initialise:} $z_0 = u_0$, $\varphi_0 = \nabla u / (\sqrt{\lVert\nabla u\rVert_2^2+\epsilon^2})$
\State Let $g^i_{\theta}$ be the neural network block with weights $\theta$ that is trained to be the update step.
\For{i = 1, \dots, N} \Comment{$N=5$}
\State $\Delta u_{i+1},\, \Delta \varphi_{i+1},\, \Delta z_{i+1} = g^i_{\theta}(u_0,t, u_i, \varphi_i,z_i)$
\State $u_{i+1} = u_i + \Delta u_{i+1}$
\State $\varphi_{i+1} = \varphi_i + \Delta \varphi_{i+1}$
\State $z_{i+1} = z_i + \Delta z_{i+1}$
\EndFor
\State \textbf{return} $u_N, \varphi_N$
\vspace{2pt}
\end{algorithmic}
\end{algorithm}

Lastly, we consider a learned gradient decent-type network approach. LGD~\cite{Andrychowicz2016learning,flynn2019deepview} and similar approaches such as learned primal-dual~\cite{adler2018learned} change the learning task in that they learn the update steps of an optimisation approach. Let us assume we have an objective function $L(u)$, then the standard gradient descent reads $u_{i+1} = u_i + \alpha_i \nabla L(u_i)$. This update rule assumes differentiability of the objective function and does not include any second-order information. The idea of LGD is to learn the update step and replace it with neural network blocks for a fixed number of iterations. In our case, we train five network blocks, each representing one update step. Each block consists of two convolutional layers that are connected via a softplus activation. This particular set of parameter choices gave us the best results with the least amount of network depth for fast computation. The input of each block is the initial image $u_0$, the time matrix $\tau$, the previous updates of $u_{\theta}(t)$ and $\varphi_{\theta}(t)$, as well as the state of the update step that we denote by $z$. This additional variable has been shown to improve results as it directly links the state of the previous update step. The state variable is initialised with the initial image $u_0$ and the diffusivity term is initialised with the regularised version $\varphi_{ini} = \nabla u / \left(\sqrt{\lVert\nabla u\rVert_2^2 +\epsilon^2} \right)$. The output of each gradient block is then the learned update steps $\Delta u, \Delta \varphi, \Delta z$ that are added to the corresponding values. An overview of the LGD architecture is shown in Algorithm \ref{alg.CH5.LGD}. The network blocks $g^i_{\theta}$ with weights $\theta$ are then trained using the loss functional \eqref{eq.CH5.loss}. 

All three network architectures provide different advantages in their designs. The Semi-ResNet is a simple network with no down- and upsampling or different network blocks. With the chosen configuration, the Semi-ResNet has 14,947 parameters to be learned. We also choose this network design to investigate the complexity of architecture needed. It also gives a baseline as to the improvements through more complex network architectures. The U-Net itself has been proven beneficial in many imaging tasks and uses spatial down- and upsampling. However, a significantly larger number of parameters (272,972) needs to be trained compared to the Semi-ResNet. Lastly, the LGD has approximately twice as many parameters (33,415) as the Semi-ResNet. In the following, we will compare the performance of these three networks and investigate the generalisability as well as the training and evaluation time.

\section{Results}\label{sec.results}
The goals of our numerical experiments are as follows: First, we investigate results for minimising the designed loss functional \eqref{eq.CH5.loss} without a deep learning framework and, in that, evaluate the feasibility of the loss. These results serve as a baseline and show that the minimiser of the loss functional approximates a solution to the TV flow. We expect this to be a slow optimisation, and it does not serve the goal of a computational speedup of the TV flow solution. 

Secondly, we evaluate the neural network approach, that is, the TVflowNET, in detail. We show all results for the three different architecture designs and implement training regimes on four different image sizes. We investigate the generalisability of the trained networks by looking into the performance of the TVflowNET on images of different sizes and types compared to the training set, such as medical images. 

One of the properties of the TV functional is one-homogeneity. We investigate which of the networks can retain this property and how well. As the transform in the spectral TV decomposition is based on the second temporal derivative of the TV flow solution, we also look into the use the TVflowNET in the derivation of spectral TV-filtered images. That is, we compare the feasibility of using the automatic differentiation through the neural network for calculating the second temporal derivative of $u$ compared to finite differences. Lastly, we evaluate the computation time, that is, the training time and the evaluation time and compare them to the model-driven approach and, for the sake of completeness, also to the joint space-time optimisation.

In the following, we will use the model-driven approach as our ground truth solution. By model-driven approach, we refer to the repeated solution of the non-smooth ROF problem~\eqref{eq.CH2.ROF} via Chambolle's projection method~\cite{Chambolle2004} as introduced by Gilboa et al.~\cite{Gilboa2013,Gilboa2014}. While the original spectral TV decomposition paper~\cite{Gilboa2014} provides code that runs on a CPU, we are additionally using a primal-dual implementation~\cite{Chambolle2010,Hammernik2017} that runs on a GPU. Since the neural network training and evaluation typically is done on GPU, this allows for a comparison based on similar resources. We use greyscale images for simplicity.

\subsection{Joint space-time optimisation}
The optimisation for the joint space-time approach is run for each initial image $u_0$ individually but simultaneously for $50$ equidistant time points in $[0,1]$. We employ the Adam optimiser with a learning rate of $5e-3$. The loss term weighting factors are set to $\alpha_1, \alpha_2 = 0.0001$, which provided empirically the best results to approximate the TV flow solution $u$. We evaluate the approach on $200$ images of the Food101~\cite{Food101} test dataset. We crop the images to evaluate three different image sizes, that are, $32\times32$px, $96\times96$px, and $256\times256$px. For images of size $32 \times 32$px and $96 \times 96$px, we iterate for $2,000$ epochs. In comparison, we need significantly more iterations for larger images such as $256 \times 256$px to obtain good results; that is, we iterated for $14,000$ epochs. 

\begin{table}[t]
\centering
    \begin{tabular}{ccccc}
     \toprule
    & & \multicolumn{3}{c}{\textbf{Evaluation Size}}                   \\
     & & 32 x 32px & 96 x 96px & 256 x 256px  \\
     \hline
     $u$ & PSNR &  $42.05 \pm 3.03$ & $43.47 \pm 1.88$ & $40.39 \pm 0.27$\\
     & SSIM &  $0.989 \pm 0.008$ & $0.990 \pm 0.006$ & $0.978 \pm 0.0005$\\
     $\text{div}\varphi$ &PSNR &  $29.86 \pm 1.47$ & $29.61 \pm 1.40$ & $24.09 \pm 0.95$\\
     & SSIM &  $0.671 \pm 0.142$ & $0.676 \pm 0.111$ & $0.4075 \pm 0.137$\\
     \bottomrule
    \end{tabular}
    \caption[Evaluation of the joint space-time optimisation compared to the model-driven approach for solving the TV flow $u$ and approximating the diffusivity term $\varphi$.]{Evaluation of the joint space-time optimisation compared to the model-driven approach for solving the TV flow $u$ and approximating the diffusivity term $\varphi$ at 50 time instances on $200$ images of the Food101~\cite{Food101} testing dataset for three different image sizes. Reported PSNR and SSIM values correspond to the average and standard deviation over the dataset and time instances.\label{tb.CH5.joint_space-time}}
\end{table}

We evaluate the performance of the joint space-time approach for approximating both the TV flow solution $u$ and the diffusivity term $\varphi$. The model-driven approach serves as the ground truth solution for comparison. The results based on the image similarity measures PSNR and SSIM are shown in Table \ref{tb.CH5.joint_space-time}. For the TV flow solution $u$, we can show that the joint space-time optimisation is able to approximate the solution images with high fidelity with a maximum PSNR of 43.477dB and SSIM of 0.99. We also report the standard deviation over the evaluation dataset which shows the result is relatively consistent over the dataset. While the approach produces good approximations of the TV flow solution with fewer iterations for smaller images, obtaining a similar quality for larger images requires a significantly higher number of iterations, increasing the computation time. However, we can show that the PSNR and SSIM values also consistently increase with an increasing number of iterations. For $256 \times 256$px images, the optimisation with $2,000$ iterations rendered a PSNR of $30.51$dB, with $7,000$ iterations of $36.30$dB and with $10,000$ iterations of $38.74$dB. We additionally ran the optimisation for $14,000$ iterations as reported in Table~\ref{tb.CH5.joint_space-time}, which rendered a PSNR of $40.39$dB. However, as we will later look at the computation time, there is a trade-off to be made between image quality and the time it takes to run the optimisation. For this purpose, we compare the joint space-time optimisation approach for $14,000$ iterations and not higher for the $256 \times 256$px images. An example image of the result is shown in Figure \ref{fig.CH5.TVFlowNET_result_u}. 

The model-driven approach does not directly calculate the diffusivity term $\varphi$. Therefore, we evaluate the ability of the joint space-time optimisation approach to approximate $\varphi$ by comparing the divergence $\text{div}\varphi$ from the joint space-time optimisation to the finite difference approximation of the temporal derivative $u_t$ of the model-driven approach. Compared to the PSNR and SSIM values for the TV flow solution $u$, the metric results for $\text{div}\varphi$, as shown in Table \ref{tb.CH5.joint_space-time}, are considerably lower. We attribute this result to the fact that the weighting of the terms in the loss functional, including the minimal norm factor, was chosen in terms of the performance of the TV flow solution $u$. Visual inspection of the resulting images as displayed in Figure \ref{fig.CH5.TVFlowNET_result_phi} shows that while the PSNR and SSIM values are lower compared to the evolved TV flow images, the joint space-time optimisation still recovers the divergence with good accuracy, only differing from the ground truth for larger time instances. Overall, the experiments show the ability of our loss functional \eqref{eq.CH5.loss} to attain good reconstructions of TV flow images while accurately representing the temporal evolution of the TV flow. As a result, it can be considered a suitable baseline for the deep learning approach. We use this approach to see what we can expect from the neural network approximation.

\subsection{TVflowNET}
\begin{table}[t]
\centering
    \begin{tabular}{ccccc}
     \toprule
    \textbf{Architecture} & \textbf{Training Size} & \multicolumn{3}{c}{\textbf{Evaluation Size}}                   \\
          & & 32 x 32px & 96 x 96px & 256 x 256px  \\
     \hline
     Semi-ResNet & 32 x 32px & 41.07 $\pm$ 0.09 & 42.73  $\pm$ 0.12 & 43.59  $\pm$ 0.15\\
     Semi-ResNet & 96 x 96px & 39.52 $\pm$ 0.23 & 41.48  $\pm$ 0.22 & 42.39  $\pm$ 0.34\\
     Semi-ResNet & 256 x 256px & 29.25 $\pm$ 1.05 & 32.38  $\pm$ 1.07  & 33.85  $\pm$ 1.02\\
     Semi-ResNet & mixed & 39.35 $\pm$ 0.05 & 40.93  $\pm$ 0.12 & 41.42  $\pm$ 0.21\\
     U-Net & 32 x 32px & \textbf{42.2 $\pm$ 0.08} & \textbf{43.95   $\pm$ 0.21} & 44.84  $\pm$ 0.31\\
     U-Net & 96 x 96px & 41.48 $\pm$ 0.21 & 43.86  $\pm$ 0.19 & 44.82  $\pm$ 0.2\\
     U-Net & 256 x 256px & 38.18 $\pm$ 0.14 & 39.63  $\pm$ 0.25 & 40.1  $\pm$ 0.29\\
     U-Net & mixed & 41.16  $\pm$ 0.3 & 42.08  $\pm$ 0.29 & 42.29  $\pm$ 0.28\\
     LGD & 32 x 32px & 41.72  $\pm$ 0.08 & 43.8  $\pm$ 0.12 & \textbf{45.14  $\pm$ 0.16}\\
     LGD & 96 x 96px & 39.95  $\pm$ 0.45 & 42.27  $\pm$ 0.29 & 43.6  $\pm$ 0.2\\
     LGD & 256 x 256px & 35.15  $\pm$ 0.13 & 38.27  $\pm$ 0.19 & 39.55  $\pm$ 0.28\\
     LGD & mixed & 39.79  $\pm$ 0.17 & 41.66  $\pm$ 0.09 & 42.67  $\pm$ 0.12\\
     \bottomrule
    \end{tabular}
    \caption[Evaluation (PSNR) of the TVflowNET trained on different architectures and image sizes for solving the TV flow $u$.]{Evaluation (PSNR) of the TVflowNET trained on different architectures and image sizes for solving the TV flow $u$ at 50 time instances for three different image sizes. The reported PSNR values are based on the comparison to the model-driven approach and correspond to the average over $200$ images of the Food101~\cite{Food101} testing dataset.  The best result for each evaluation size is marked in bold. We additionally report the standard deviation based on the network trained five separate times for each architecture.}\label{tb.CH5.TVflowNET_results_u_psnr}
\end{table}

\begin{table}[t]
\centering
    \begin{tabular}{ccccc}
     \toprule
    \textbf{Architecture} & \textbf{Training Size} & \multicolumn{3}{c}{\textbf{Evaluation Size}}                   \\
          & & 32 x 32px & 96 x 96px & 256 x 256px  \\
     \hline
     Semi-ResNet & 32 x 32px & 0.986  $\pm$ 0.0002 & 0.989  $\pm$ 0.0002 & 0.989  $\pm$ 0.0002 \\
     Semi-ResNet & 96 x 96px & 0.982  $\pm$ 0.0008  & 0.986  $\pm$ 0.0007  & 0.987  $\pm$ 0.0009 \\
     Semi-ResNet & 256 x 256px & 0.89  $\pm$ 0.018 & 0.939  $\pm$ 0.009 & 0.95  $\pm$ 0.006\\
     Semi-ResNet & mixed & 0.981  $\pm$ 0.0005 & 0.985  $\pm$ 0.0004 & 0.985  $\pm$ 0.0005\\
     U-Net & 32 x 32px & \textbf{0.991  $\pm$ 0.0003} & 0.992  $\pm$ 0.0007 & 0.991  $\pm$ 0.0008\\
     U-Net & 96 x 96px & 0.988  $\pm$ 0.0009 & 0.992  $\pm$ 0.0003 & 0.992  $\pm$ 0.0005\\
     U-Net & 256 x 256px & 0.974  $\pm$ 0.0008 & 0.981  $\pm$ 0.001 & 0.982  $\pm$ 0.001\\
     U-Net & mixed & 0.987  $\pm$ 0.001 & 0.988  $\pm$ 0.0009 & 0.986  $\pm$ 0.0008\\
     LGD & 32 x 32px & 0.989  $\pm$ 0.0002 & \textbf{0.992  $\pm$ 0.0002} & \textbf{0.993  $\pm$ 0.0002}\\
     LGD & 96 x 96px & 0.985  $\pm$ 0.001 & 0.99  $\pm$ 0.0005 & 0.991  $\pm$ 0.0004\\
     LGD & 256 x 256px & 0.966  $\pm$ 0.001 & 0.981  $\pm$ 0.001 & 0.983  $\pm$ 0.001\\
     LGD & mixed & 0.984  $\pm$ 0.0004 & 0.989  $\pm$ 0.0003 & 0.989  $\pm$ 0.0003\\
     \bottomrule
    \end{tabular}
    \caption[Evaluation (SSIM) of the TVflowNET trained on different architectures and image sizes for solving the TV flow $u$.]{Evaluation (SSIM) of the TVflowNET trained on different architectures and image sizes for solving the TV flow $u$ at 50 time instances for three different image sizes. The reported SSIM values are based on the comparison to the model-driven approach and correspond to the average over $200$ images of the Food101~\cite{Food101} testing dataset.  The best result for each evaluation size is marked in bold. We additionally report the standard deviation based on five separate network trainings of each architecture.}\label{tb.CH5.TVflowNET_results_u_ssim}
\end{table}
The TVflowNET is trained for three different network architectures, as introduced above. We run four training regimes for each of the architectures based on different image sizes. The networks are trained on 625 images cropped to sizes $32 \times 32$px, $96 \times 96$px, $256 \times 256$px and finally on 625 images of mixed sizes randomly cropped between $32 \times 32$px and $256 \times 256$px of the Food101 dataset~\cite{Food101}. During training, we randomly sampled the time instances from a uniform distribution over the interval $[0,1]$ in each epoch, and we use automatic differentiation to evaluate the temporal derivative in the loss functional. We calculate spatial derivatives via finite differences. 

We use the Adam optimiser to minimise the loss functional. All networks are trained for $750$ epochs with a learning rate of $3e-4$ on a Nvidia A100 GPU. We note that the training time is limited to 36 hours due to the high-performance computing restrictions of the available resources. In the case of training on $256 \times 256$px images, the time limit is reached before running 750 epochs. We choose the loss term weighting factors $\alpha_1, \alpha_2 = 0.0001$, which have empirically shown the best results to approximate the TV flow solution $u$ in training.

\begin{figure}[t]
    \centering
    \includegraphics[width=\textwidth]{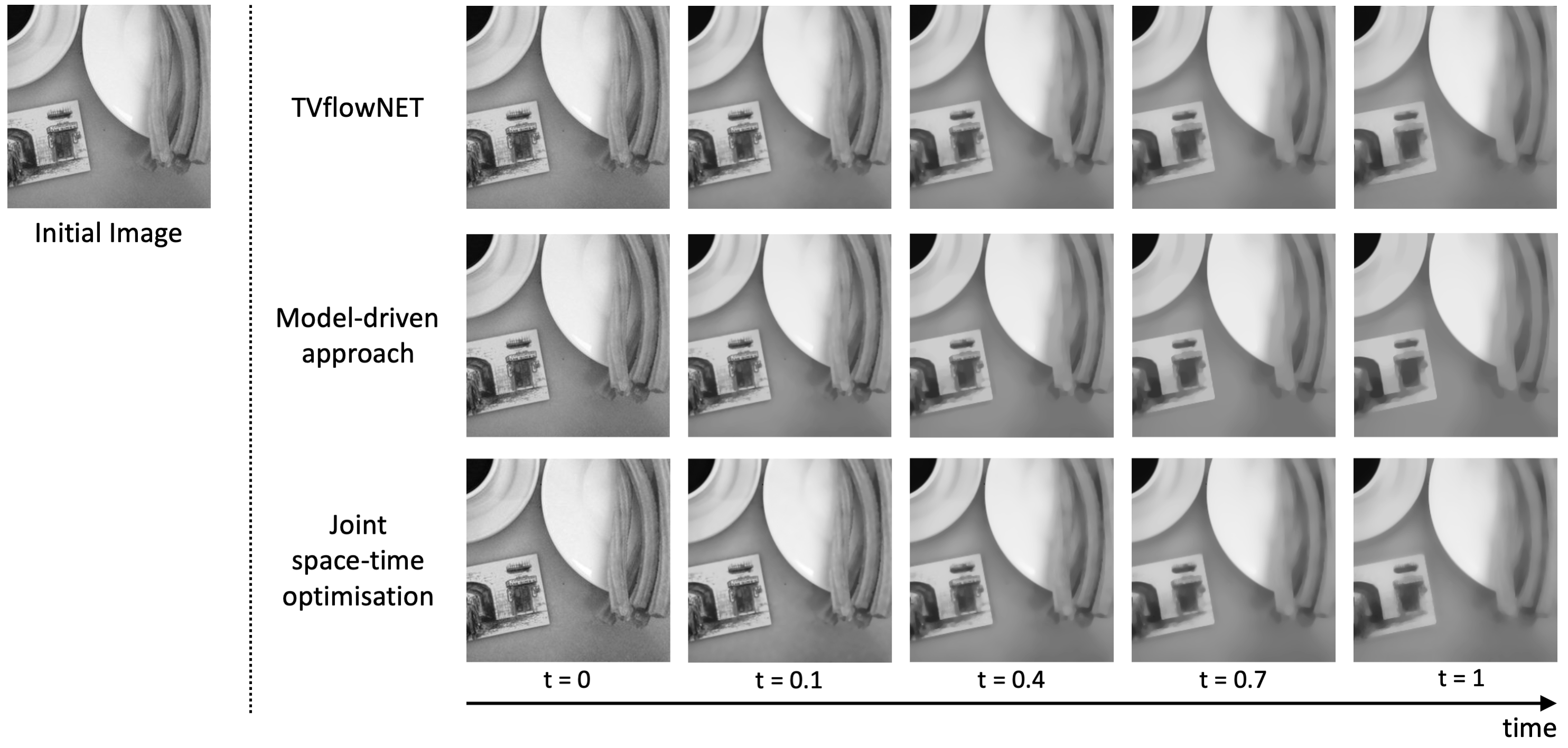}
    \caption[Visualisation of the TV flow solution $u$ derived using the TVflowNET, the model-driven approach and the joint space-time optimisation.]{Visualisation of the TV flow solution $u$ derived using the TVflowNET with the U-Net architecture, the model-driven approach and the joint space-time optimisation. The example image is taken from the Food101~\cite{Food101} dataset.}
    \label{fig.CH5.TVFlowNET_result_u}
\end{figure}

We investigate the influence of the training size on the performance of the TVflowNET applied to test images of different sizes. We evaluate the TVflowNET against the model-driven approach with test images from the Food101 dataset~\cite{Food101} with image sizes $32 \times 32$px, $96 \times 96$px and $256 \times 256$px. For comparison, we have evaluated the network on 50 equidistant time instances in $[0,1]$. The results for the quantitative similarity measures PSNR and SSIM evaluated on the testing dataset of 200 images for the TV flow solution $u$ are shown in Tables \ref{tb.CH5.TVflowNET_results_u_psnr} (PSNR) and \ref{tb.CH5.TVflowNET_results_u_ssim} (SSIM). All architectures and training regimes are able to approximate the TV flow solution at a high quality. All except one network consistently have PSNR values above 35dB and an SSIM of 0.96 or higher. The best performance is observed for the U-Net and LGD trained on $32 \times 32$px images with a PSNR value ranging up to $45.14$dB and a maximum SSIM of $0.993$. We additionally report the standard deviation with respect to the network training. That is, we trained each network five separate times. Overall, we can see a low standard deviation for all architectures, indicating that our approach is able to reliably produce the reported results.

We make several observations across all network architectures, considering the different training and testing regimes. The networks trained on $256 \times 256$px images have the lowest performance results independent of the evaluation image size or architecture. We attribute this behaviour to the fact that the training time was restricted to 36 hours by the resources available. Therefore, the optimisation may not have fully converged. The best performance for this network is shown on testing images of the same size as the training images (i.e., $256 \times 256$px) and shows that is does not generalise to other image sizes as well as the networks trained on smaller image sizes did.

However, training on larger images is unnecessary to obtain high-quality results for those image sizes. The networks trained on $32 \times 32$px and $96 \times 96$px images show outstanding performance on $256 \times 256$px images. More so, the PSNR values are higher than for the smaller testing images, whereas the SSIM stays approximately constant between the different image sizes for evaluation. This may be due to the dependency of the PSNR on the mean squared error (MSE): The TV flow evolution leads to more piecewise constant regions in the image while filtering out smaller, lower contrast structures. Therefore, when we consider large structures in the larger testing images, they may remain unchanged in the TV flow evolution up to time $t=1$, resulting in a low MSE and, hence, a higher PSNR. For our trained TVflowNET, the size and the contrast of the structures that are being filtered out through the TV flow evolution are independent of the image size in the training set as the time interval we train on stays fixed at $[0,1]$. 

Overall, while still delivering high-quality results, the Semi-ResNet has the lowest performance compared to the other architectures which is no surprise as this architecture has the least amount of parameters. The U-Net and LGD, both being more complex in their design and having more parameters than the Semi-ResNet, perform similarly well on all image sizes. An example result for a $256 \times 256$px image of the TV flow solution $u$ generated by the U-Net network trained on $32 \times 32$px is shown in Figure \ref{fig.CH5.TVFlowNET_result_u}.

\begin{table}[t]
\centering
    \begin{tabular}{ccccc}
     \toprule
    \textbf{Architecture} & \textbf{Training Size} & \multicolumn{3}{c}{\textbf{Evaluation Size}}                   \\
          & & 32 x 32px & 96 x 96px & 256 x 256px  \\
     \hline
    Semi-ResNet & 32 x 32px & 31.94 (0.821) & 34.01 (0.867) & 34.52 (0.881) \\
    Semi-ResNet & 96 x 96px & 31.09 (0.771) & 33.37 (0.831) & 34.01 (0.848) \\
    Semi-ResNet & 256 x 256px & 22.59 (0.191) & 25.45 (0.242) & 26.39 (0.257) \\
    Semi-ResNet & mixed & 30.64 (0.721) & 32.44 (0.773) & 32.76 (0.788) \\
    U-Net & 32 x 32px & \textbf{32.62} (0.836) & 34.7 (0.873) & 35.16 (0.885) \\
    U-Net & 96 x 96px & 32.36 (0.824) & 34.7 (0.873) & 35.23 (0.889) \\
    U-Net & 256 x 256px & 29.6 (0.475) & 31.09 (0.538) & 31.12 (0.552) \\
    U-Net & mixed & 31.46 (0.744) & 33.11 (0.77) & 33.21 (0.763) \\
    LGD & 32 x 32px & 32.53 \textbf{(0.854)} & \textbf{34.89 (0.896)} & \textbf{35.62 (0.91)} \\
    LGD & 96 x 96px & 31.81 (0.823) & 34.16 (0.875) & 34.86 (0.889) \\
    LGD & 256 x 256px & 28.26 (0.546) & 30.59 (0.621) & 31.0 (0.641) \\
    LGD & mixed & 31.27 (0.794) & 33.54 (0.846) & 34.15 (0.863) \\
     \bottomrule
    \end{tabular}
    \caption[Evaluation of the TVflowNET trained on different architectures and image sizes for approximating $\text{div}\,\varphi$.]{Evaluation of the TVflowNET trained on different architectures and image sizes for approximating $\text{div}\,\varphi$ at 50 time instances for three different image sizes. The reported PSNR (SSIM) values are based on the comparison to the model-driven approach and correspond to the average over $200$ images of the Food101~\cite{Food101} testing dataset.\label{tb.CH5.TVflowNET_results_phi}}
\end{table}

For the purpose of completeness, we evaluate the approximation of the diffusivity term $\varphi$. As in the joint space-time optimisation evaluation, we consider the approximation of $\text{div} \varphi$ and compare it to the finite difference approximation of the temporal derivative $u_t$ of the model-based approach. The evaluation protocol is the same as for the TV flow solution approximation. We derive the quality measures on the same testing dataset of different image sizes and for the same set of time instances. The results are shown in Table \ref{tb.CH5.TVflowNET_results_phi}. Again, the average PSNR and SSIM are lower than the results for the TV flow solution $u$. As in the joint space-time optimisation, the loss functional and, more specifically, the weights therein are chosen to give the best performance in terms of approximating the TV flow solution and not necessarily the diffusivity term. However, the quality measure values are higher than the joint space-time optimisation. This is also visually confirmed by an example image of $\text{div}\varphi$ in Figure \ref{fig.CH5.TVFlowNET_result_phi}. The best results are attained by the U-Net and LGD trained on $32 \times 32$px images with similar performance. 

\begin{figure}[t]
    \centering
    \includegraphics[width=\textwidth]{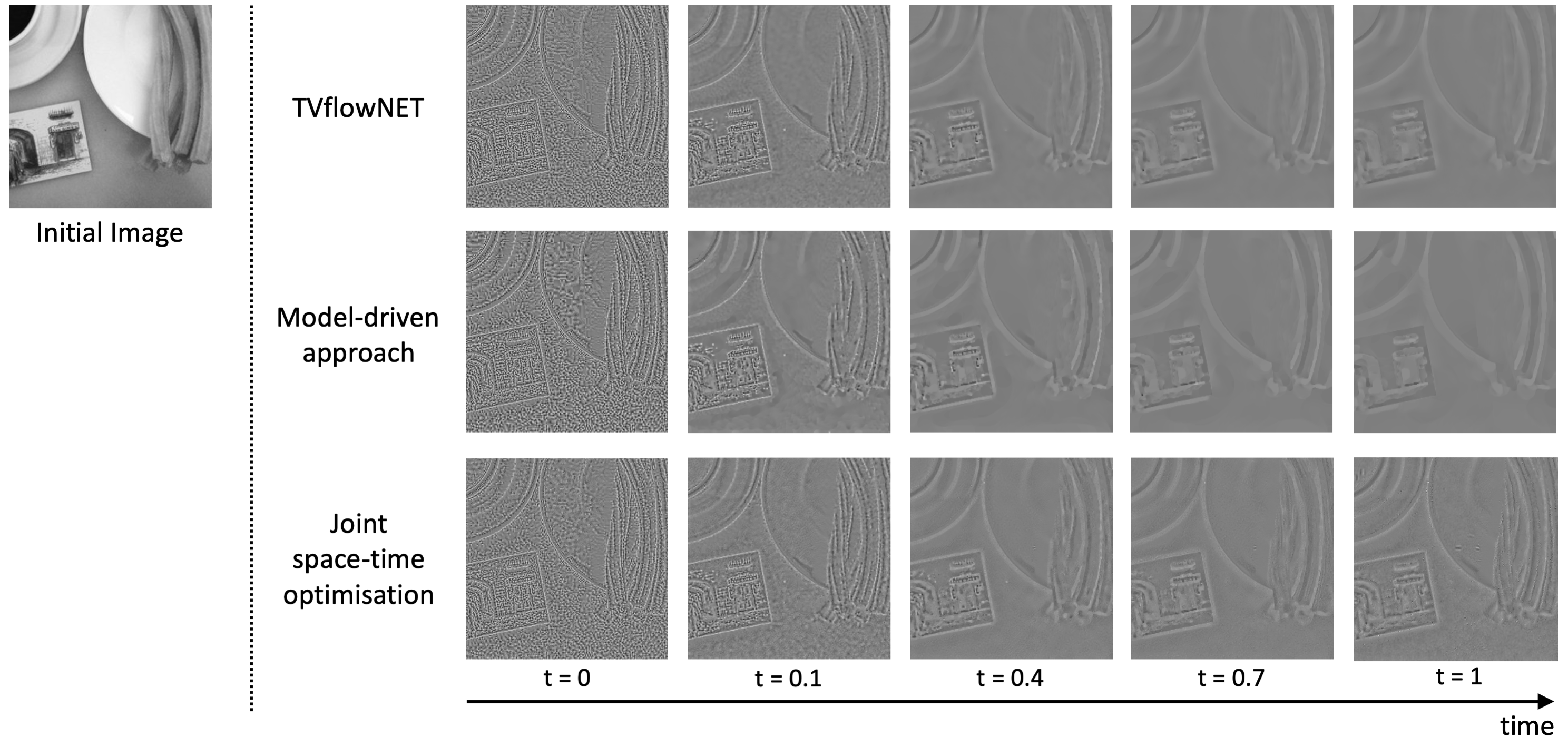}
    \caption[Visualisation of $\text{div}\,\varphi$ derived using the TVflowNET, the model-driven approach and the joint space-time optimisation.]{Visualisation of $\text{div}\,\varphi$ derived using the TVflowNET with the U-Net architecture, the model-driven approach and the joint space-time optimisation. The example image is taken from the Food101~\cite{Food101} dataset.}
    \label{fig.CH5.TVFlowNET_result_phi}
\end{figure}

\subsubsection{Generalisability}
\begin{table}[t]
\centering
    \begin{tabular}{ccccc}
     \toprule
    \textbf{Architecture} & \textbf{Training Size} & \multicolumn{3}{c}{\textbf{Evaluation Data Type}}                   \\
          & & Disks & Ellipses & Medical Data  \\
     \hline
     Semi-ResNet & 32 x 32px & \textbf{52.12} (0.983) & 43.32 (0.947) & 44.83 (0.984)\\
     Semi-ResNet & 96 x 96px & 43.92 (0.798) & 40.8 (0.906) & 43.35 (0.959)\\
     Semi-ResNet & 256 x 256px & 33.07 (0.856) & 30.07 (0.782) & 35.22 (0.935)\\
     Semi-ResNet & mixed & 42.77 (0.786) & 39.21 (0.906) & 42.46 (0.956)\\
     U-Net & 32 x 32px & 51.11 (0.975) & 43.62 \textbf{(0.963)} & \textbf{46.36 (0.989)}\\
     U-Net & 96 x 96px & 49.98 (0.968) & 43.74 (0.947) & 45.8 (0.985)\\
     U-Net & 256 x 256px & 40.96 (0.826) & 36.87 (0.883) & 41.55 (0.946)\\
     U-Net & mixed & 49.59 (0.964) & 41.0 (0.917) & 44.08 (0.976)\\
     LGD & 32 x 32px & 51.14 (0.963) & \textbf{44.62} (0.963) & 45.98 (0.987)\\
     LGD & 96 x 96px & 51.57 \textbf{(0.984)} & 43.4 (0.94) & 45.04 (0.983)\\
     LGD & 256 x 256px & 37.36 (0.604) & 34.14 (0.766) & 40.95 (0.937)\\
     LGD & mixed & 43.52(0.867) & 37.82 (0.902) & 44.02 (0.974)\\
     \bottomrule
    \end{tabular}
    \caption[Evaluation of the TVflowNET trained on different architectures and image sizes for solving the TV flow $u$ for differing image types.]{Evaluation of the TVflowNET trained on different architectures and image sizes for solving the TV flow $u$ at 50 time instances for differing image types, i.e., images of disks and ellipses as well as medical images. The reported PSNR (SSIM) values are based on the comparison to the model-driven approach and correspond to the average over $100$ images for the disk and ellipse images and over $40$ images of the spleen segmentation dataset~\cite{MedSegDecathlon}.\label{tb.CH5.TVflowNET_results_disks}}
\end{table}
We have already demonstrated the generalisability of the networks to images of different sizes than the training set (cf. Tables \ref{tb.CH5.TVflowNET_results_u_psnr} and \ref{tb.CH5.TVflowNET_results_u_ssim}). In the following, we further examine the performance of the networks on images of different types, specifically disks, ellipses, and medical images. 

The first two categories are based on the importance of eigenfunctions of the TV flow. Indicator functions of disks and ellipses are eigenfunctions of the subdifferential of the TV functional. In the spectral TV decomposition, these eigenfunctions constitute the basic elements and produce an isolated impulse response that is directly related to the size and contrast of the structure~\cite{Gilboa2014}. The shape of such structures remains unchanged throughout the TV flow evolution. However, the contrast gradually diminishes. To illustrate this effect, Figure \ref{fig.CH5.TVFlowNET_result_disk_plot} exhibits an example of the linearly decreasing pixel values for two disks differing in size and contrast. We investigate whether the learned mapping preserves this behaviour on eigenfunctions. The results are presented in Table \ref{tb.CH5.TVflowNET_results_disks}. Our evaluation encompasses $100$ images comprising isolated disks of different sizes and contrasts, along with $100$ images of overlapping ellipses with varying contrasts. The results show that all architectures of the TVflowNET successfully approximate the TV flow solution for these images, even though they were trained on natural images that did not feature isolated disks or overlapping ellipses. Notably, the performance on disk images surpasses that of the networks on natural images. The networks have thus implicitly learned the intrinsic properties of the TV flow and, consequently, the behaviour of the basic elements of the decomposition, with no direct guidance or having encountered similar images. The performance of TVflowNET on ellipse images aligns with its performance on natural images. To provide visual examples, Figure \ref{fig.CH5.TVFlowNET_result_disk} presents the results for a disk image, while Figure \ref{fig.CH5.TVFlowNET_result_ellipse} displays the outcomes for an ellipse image.

\begin{figure}[t]
         \centering
         \includegraphics[width=\textwidth]{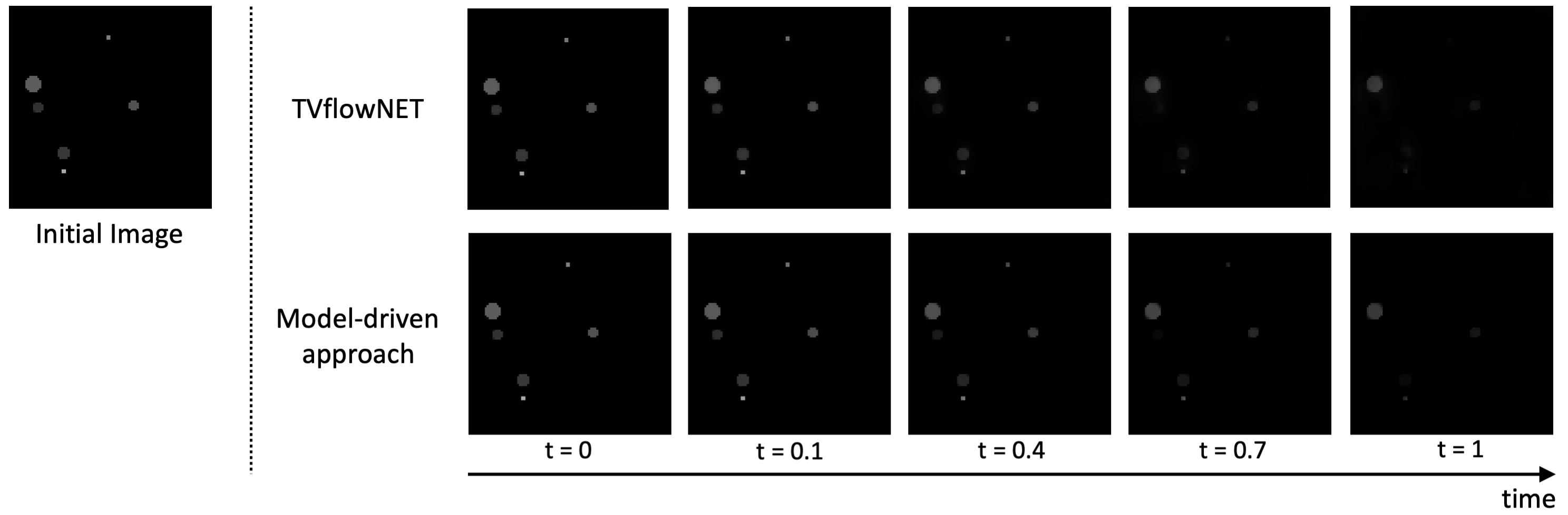}
         \caption[TV flow solution of an image containing isolated disks comparing the TVflowNET and the model-driven approach.]{TV flow solution of an image containing isolated disks derived using the TVflowNET with the U-Net architecture and the model-driven approach.}
         \label{fig.CH5.TVFlowNET_result_disk}
\end{figure}

\begin{figure}[t]
         \centering
         \includegraphics[width=0.85\textwidth]{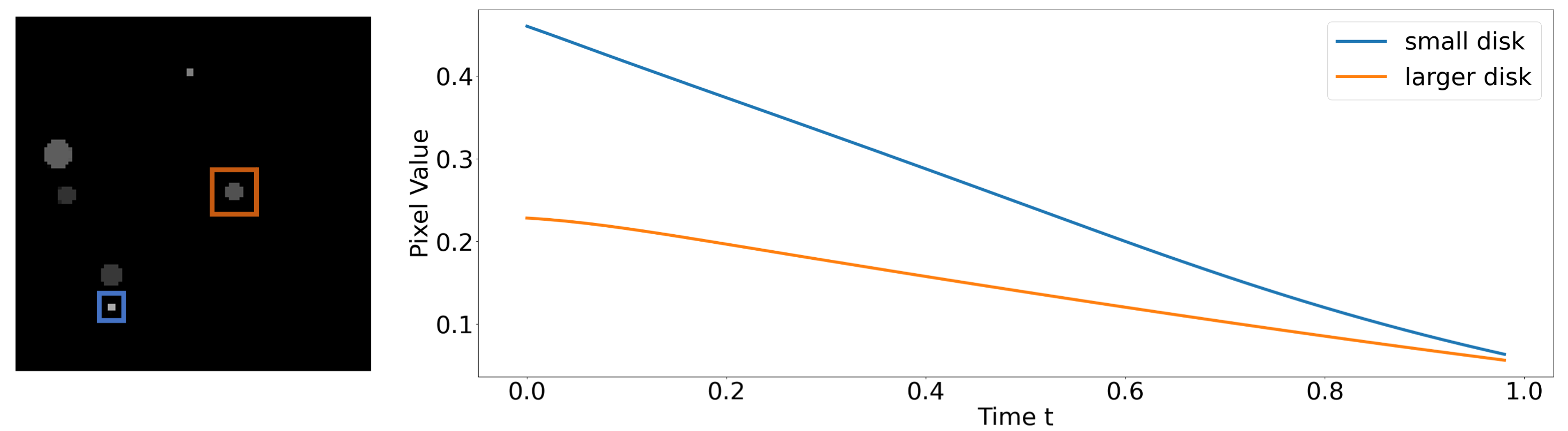}
         \caption[Evolution of the pixel value through the TV flow for two of the disks.]{Evolution of the pixel value through the TV flow for two of the disks. Pixel values decrease linearly.}
         \label{fig.CH5.TVFlowNET_result_disk_plot}
\end{figure}

\begin{figure}[t]
    \centering
    \includegraphics[width=\textwidth]{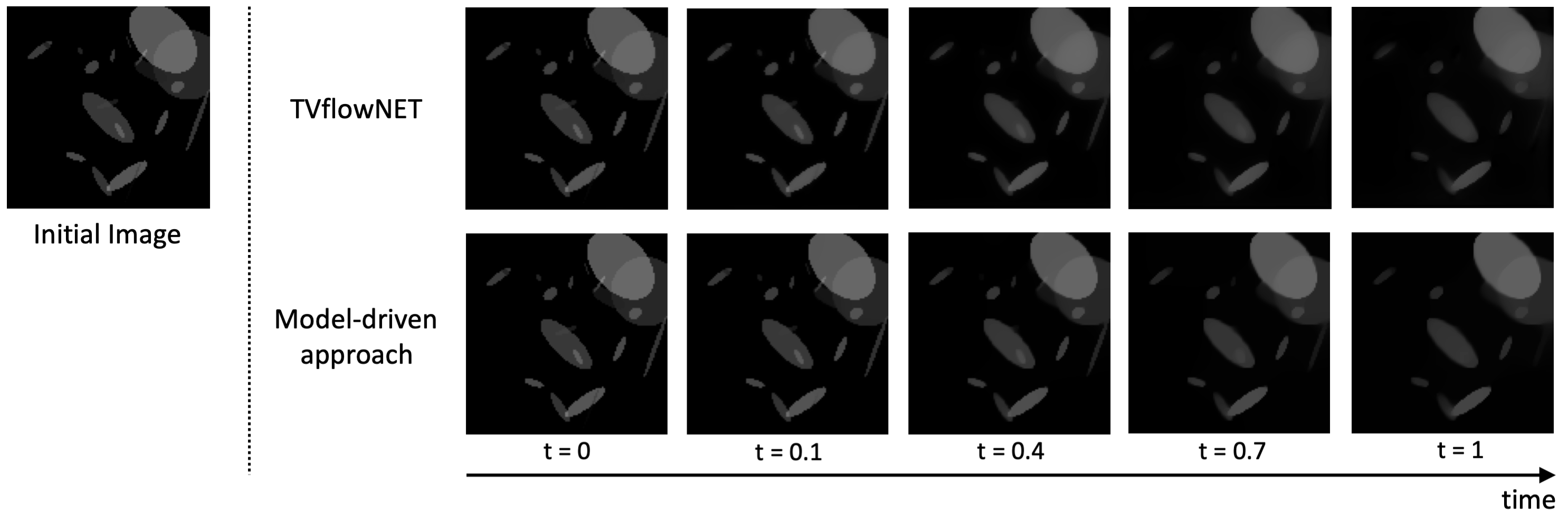}
    \caption[Example of the TV flow solution for ellipse images comparing the TVflowNET and the model-driven approach.]{Example of the TV flow solution for ellipse images derived using the TVflowNET with the U-Net architecture and the model-driven approach.}
    \label{fig.CH5.TVFlowNET_result_ellipse}
\end{figure}

Lastly, we consider medical images of the Medical Segmentation Decathlon Dataset~\cite{MedSegDecathlon}. Medical images differ from natural images as they often contain smooth, dark backgrounds with defined structures that represent parts of the scanned body. In this case, we evaluate the TVflowNET performance on the data depicting spleen scans. The image resolution is $512 \times 512$px. The results are shown in Table \ref{tb.CH5.TVflowNET_results_disks} and an example image is shown in Figure \ref{fig.CH5.TVFlowNET_result_spleen}. All network architectures are able to approximate the TV flow solution with high accuracy with a PSNR of up to $46.36$dB. Indeed, the experiments on disks, ellipses and medical data confirm that the trained TVflowNET is able to approximate the TV flow solution successfully with high fidelity.

\begin{figure}[t]
    \centering
    \includegraphics[width=\textwidth]{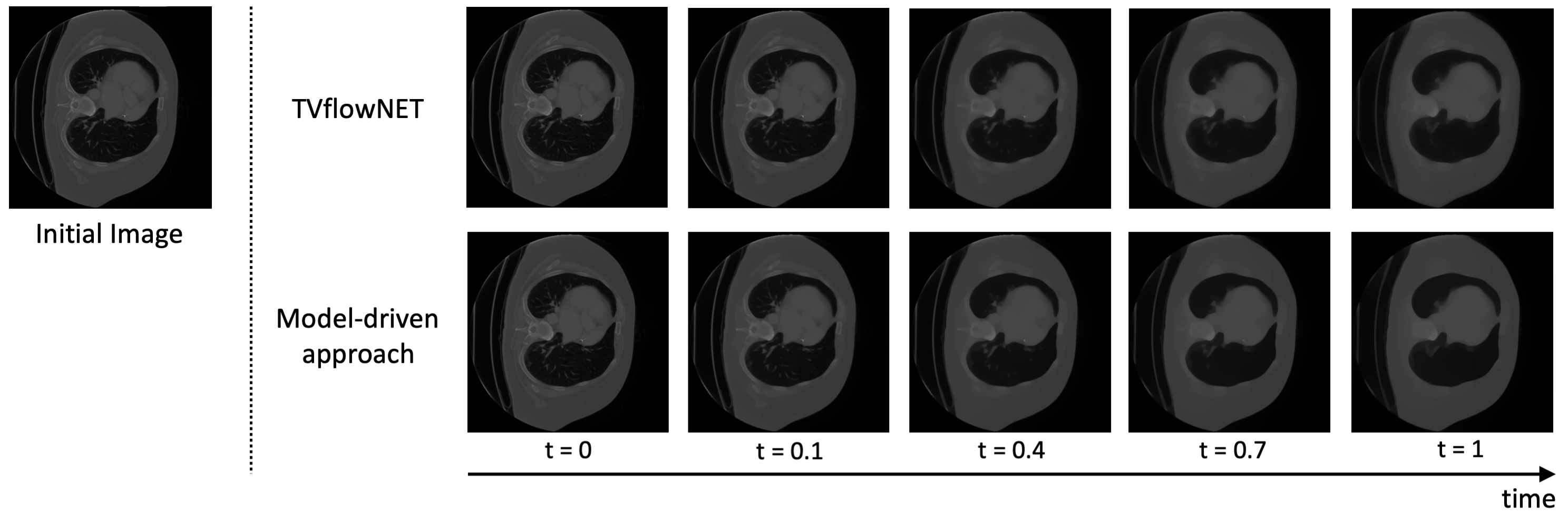}
    \caption[Visualisation of the TV flow solution for medical data comparing the TVflowNET and the model-driven approach.]{Visualisation of the TV flow solution for medical data derived using the TVflowNET with the U-Net architecture and the model-driven approach. The example image is taken from the Medical Segmentation Decathlon Dataset~\cite{MedSegDecathlon}.}
    \label{fig.CH5.TVFlowNET_result_spleen}
\end{figure}

\subsubsection{One-homogeneity}
\begin{table}[t]
\centering
    \begin{tabular}{cccc}
     \toprule
    \textbf{Architecture} & \textbf{Training Size} & \multicolumn{2}{c}{\textbf{Factor for 1-hom. evaluation}}                   \\
          & & 3 & 0.3  \\
     \hline
     Semi-ResNet & 32 x 32px & 51.22 (0.997) & 29.91 (0.852) \\
     Semi-ResNet & 96 x 96px & 46.71 (0.996) & 32.4 (0.929) \\
     Semi-ResNet & 256 x 256px & 28.55 (0.944) & 22.35 (0.839) \\
     Semi-ResNet & mixed & 40.45 (0.994) & 28.94 (0.813)\\
     U-Net & 32 x 32px & 50.34 (0.997) & \textbf{35.78} (0.941) \\
     U-Net & 96 x 96px & 46.02 (0.994) & 35.41 (0.952) \\
     U-Net & 256 x 256px & 35.24 (0.983) & 34.36 \textbf{(0.957)} \\
     U-Net & mixed & 39.67 (0.988) & 34.68 (0.943) \\
     LGD & 32 x 32px & \textbf{52.0 (0.998)} & 26.11 (0.721) \\
     LGD & 96 x 96px & 42.27 (0.993) & 29.12 (0.849) \\
     LGD & 256 x 256px & 30.6 (0.979) & 28.57 (0.879) \\
     LGD & mixed & 43.53(0.992) & 10.38 (0.646) \\
     \bottomrule
    \end{tabular}
    \caption[Evaluation of the one-homogeneity property of the TVflowNET trained on different architectures and image sizes at 50 time instances for three different image sizes.]{Evaluation of the one-homogeneity property of the TVflowNET trained on different architectures and image sizes at 50 time instances for three different image sizes. The reported PSNR (SSIM) values are based on comparison to the model-driven approach and correspond to the average over $600$ images of different sizes of the Food101~\cite{Food101} testing dataset.\label{tb.CH5.TVflowNET_results_one-hom}}
\end{table}
One of the properties of the TV flow is derived from the one-homogeneity of the TV functional. That is, contrast changes of the image result in a shift of the time parameter in the TV flow solution~\eqref{eq.CH2.one-hom}. We evaluate the TVflowNET on its ability to retain the one-homogeneity property without direct penalisation in training. To this end, we multiply the test images with two different factors, $c = 3$ and $c = 0.3$ and compare them to the TVflowNET applied to the original images and evaluated at times $t/3$ and $t/0.3$, respectively. The results are based on $600$ images of mixed resolution and are shown in Table~\ref{tb.CH5.TVflowNET_results_one-hom}. For the factor $c = 3$, we obtain high PSNR and SSIM values with a maximum of $52$dB in the LGD network trained on $32 \times 32$px images. Due to the relation between contrast change and time evolution, the resulting times for $t \in [0,1]$ correspond to $t/3 \in [0,0.33]$ and with that lie in the originally trained time interval. In contrast, for initial images multiplied with $c = 0.3$, the time shift corresponds to an interval of $[0,3.33]$ and some of the network evaluations lie well outside the trained time parameters. Therefore, the one-homogeneity property also investigates the generalisability to larger time instances of the trained TVflowNET. The results in Table \ref{tb.CH5.TVflowNET_results_one-hom} show that in this case the networks do not retain the one-homogeneity property with high accuracy. Solely the U-Net architecture has PSNR values consistently above $34$dB. However, for the larger time instances, this also becomes less accurate. We show an example for the U-Net trained on $32 \times 32$px images in Figure \ref{fig.CH5.TVFlowNET_result_onehom}. For images corresponding to $t/0.3>1$, we can see a degradation of the results. Therefore, the one-homogeneity property holds for the TVflowNET only for factors $c>0$ such that $t/c \in [0,1]$ in the trained time interval. 

\begin{figure}[t]
    \centering
    \includegraphics[width=\textwidth]{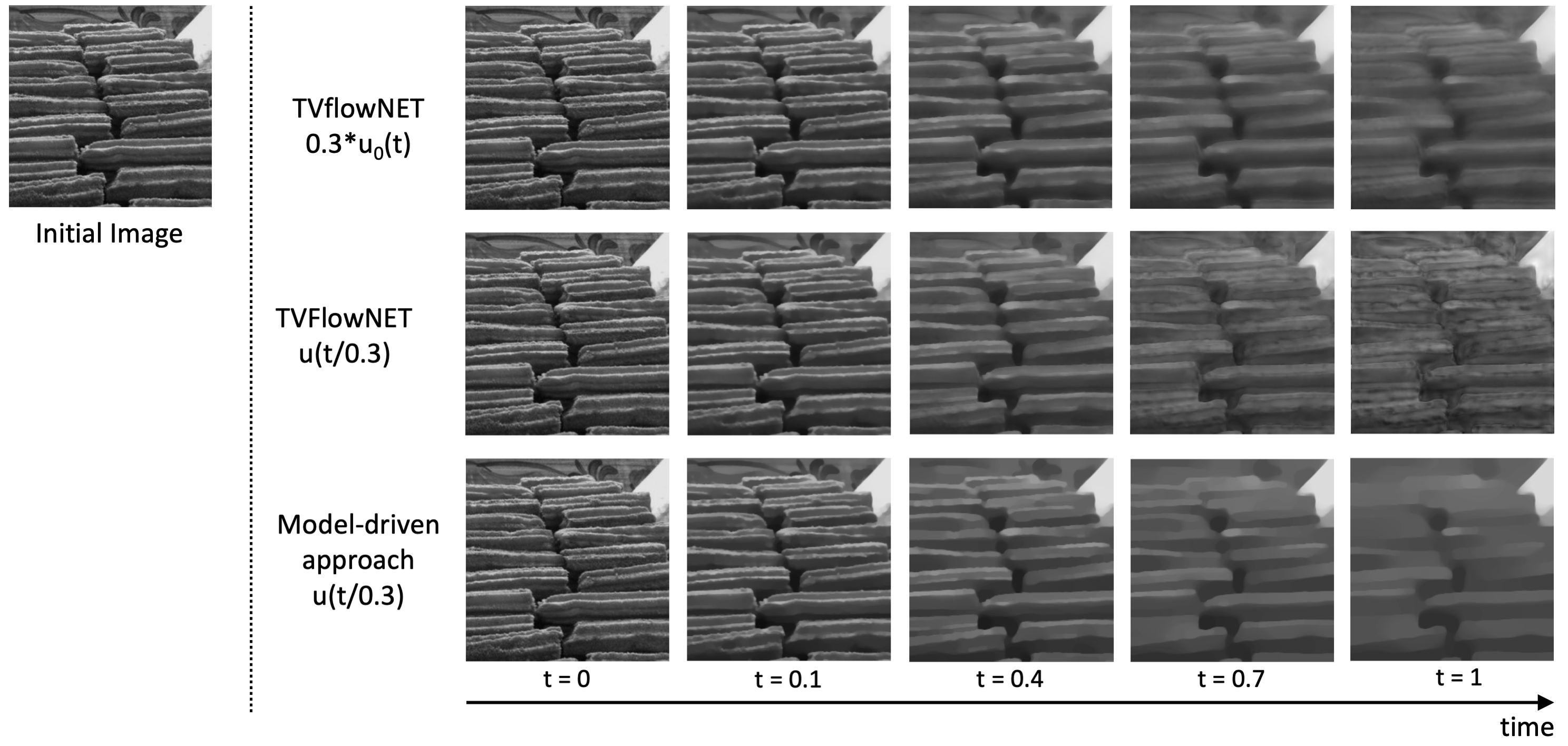}
    \caption[Visualisation of the one-homogeneity property for a factor $c=0.3$.]{Visualisation of the one-homogeneity property for a factor $c=0.3$ to display the effect of contrast change of the initial image $0.3u_0(t)$ and the associated change in time $u(t/0.3)$. Results are compared to the time change in the model-driven approach. The example image is taken from the Food101~\cite{Food101} dataset.}
    \label{fig.CH5.TVFlowNET_result_onehom}
\end{figure}

\subsubsection{Spectral TV decomposition}
\begin{table}[t]
\centering
    \begin{tabular}{ccccc}
     \toprule
    \textbf{Architecture} & \textbf{Training Size} & \multicolumn{3}{c}{\textbf{$u_0$ reconstruction}}                   \\
          & & Autodiff & FD TVflowNET & FD GT  \\
     \hline
     Semi-ResNet & 32 x 32px & 47.53 (0.997) & 47.68 (0.997) & 66.53 (0.999)\\
     Semi-ResNet & 96 x 96px & 43.82 (0.994) & 44.18 (0.995) & 66.53 (0.999)\\
     Semi-ResNet & 256 x 256px & 29.98 (0.877) & 30.21 (0.885) & 66.53 (0.999)\\
     Semi-ResNet & mixed & 41.35 (0.988) & 41.72 (0.988) & 66.53 (0.999)\\
     U-Net & 32 x 32px & \textbf{50.32} (0.998) & \textbf{50.32} (0.998) & 66.53 (0.999)\\
     U-Net & 96 x 96px & 48.14 (0.997) & 48.2 (0.997) & 66.53 (0.999)\\
     U-Net & 256 x 256px & 38.57 (0.971) & 38.58 (0.971) & 66.53 (0.999)\\
     U-Net & mixed & 43.94 (0.992) & 43.96 (0.992) & 66.53 (0.999)\\
     LGD & 32 x 32px & 49.69 (0.998) & 49.71 (0.998) & 66.53 (0.999)\\
     LGD & 96 x 96px & 45.79 \textbf{(0.996)} & 45.82 \textbf{(0.996)} & 66.53 (0.999)\\
     LGD & 256 x 256px & 37.14 (0.969) & 37.15 (0.97) & 66.53 (0.999)\\
     LGD & mixed & 42.82 (0.993) & 42.84 (0.993) & 66.53 (0.999)\\
     \bottomrule
    \end{tabular}
    \caption[Comparison of the initial image reconstruction using automatic differentiation versus finite differences of the TVflowNET as well as the ground truth results.]{Comparison of the initial image reconstruction using automatic differentiation (Autodiff) versus finite difference (FD) of the TVflowNET as well as the ground truth (GT) results at 50 time instances. The reported PSNR (SSIM) values are based on the comparison to the initial image and correspond to the average over $200$ images of the Food101~\cite{Food101} testing dataset.\label{tb.CH5.TVflowNET_results_autodiff}}
\end{table}
The motivation for learning the TV flow solution initially stemmed from the need to generate faster methods for spectral TV decomposition, where solving the TV flow is the most computationally expensive part. In the spectral TV decomposition process, the subsequent step to solving the TV flow involves computing the TV transform $u_{tt}(t;x)t$, which requires evaluating the second temporal derivative of the TV flow solution $u$. However, our approach did not explicitly penalise the training of the TVflowNET to align the automatic differentiation second derivative with the actual second derivative of the function. In this section, we examine the use of automatic differentiation versus finite differences on the trained neural networks for calculating the spectral TV decomposition. We demonstrate that the TVflowNET solution can be utilised to acquire the decomposed images. Specifically, we assess the reconstruction of the initial images from band-pass filtered images. We know that the integral over time of the TV transform renders the initial image~\cite{Gilboa2013,Gilboa2014}. That is, the inverse transform is defined as
\begin{align*}
    u_0(x) = \int_0^T u_{tt}(t;x)t \, dt + u_r(T,x),
\end{align*}
where $u_r(T,x) = u(T;x) - u_t(T;x)T$ is the residual~\cite{Gilboa2014}. Our comparison for the recovery of $u_0$ is threefold: Firstly, we derive the TV transform based on the TVflowNET output using automatic differentiation and secondly, using finite differences. Lastly, we derive the TV transform from the ground truth images generated by the model-driven approach and using finite differences. We then integrate over all time instances and compare the resulting images to the initial image. The results are shown in Table \ref{tb.CH5.TVflowNET_results_autodiff} based on $200$ images of size $96 \times 96$px and $50$ time instances. The performances of the derivative via automatic differentiation and finite differences based on the TVflowNET output are almost identical. As expected, the initial image reconstruction using the ground truth TV flow solution performs best. However, all three approaches have high PSNR and SSIM values, and the TVflowNET solution is, therefore, appropriate for deriving the spectral TV decomposition.

\subsubsection{Computation Time}
\begin{table}[t]
\centering
    \begin{tabular}{cccc}
     \toprule
    \textbf{Architecture} & \textbf{Training Size} & \textbf{Training (min)} & \textbf{Time per Epoch (sec)}                 \\
     \hline
     Semi-ResNet & 32 x 32 & 721.8 & 29.0 \\
     Semi-ResNet & 96 x 96 & 755.2 & 31.6\\
     Semi-ResNet & 256 x 256 & 2106.6 & 281.5\\
     Semi-ResNet & mixed & 806.5 & 40\\
     U-Net & 32 x 32 & 860.5 & 36.8 \\
     U-Net & 96 x 96 & 902.3 & 41.5\\
     U-Net & 256 x 256 & 2158.1 & 177\\
     U-Net & mixed & 1088.5 & 56\\
     LGD & 32 x 32 & 965.5 & 44.5\\
     LGD & 96 x 96 & 1069.7 & 47.4\\
     LGD & 256 x 256 & 2148.1 & 346.3\\
     LGD & mixed & 947.4 & 61.9\\
     \bottomrule
    \end{tabular}
    \caption[Training times in minutes of the different TVflowNET architectures trained on varying image sizes.]{Training times in minutes of the different TVflowNET architectures trained on varying image sizes. This refers to the time to reach the model with the best results on the validation set. Additionally, the time per epoch is reported in seconds.\label{tb.CH5.TVFlowNET_training_time}}
\end{table}
We have demonstrated the possibility of accurately approximating the TV flow solution using the TVflowNET in much detail above. The primary objective of employing a neural network approximation, next to achieving high image quality, is to increase computational efficiency and reduce processing time. To assess the computational speedup, we compare the evaluation time of each TVflowNET architecture with both the model-driven approach and the joint space-time optimisation. We also include the training time in Table \ref{tb.CH5.TVFlowNET_training_time} to provide a comprehensive analysis. All TVflowNET architectures were trained on an Nvidia A100 GPU and evaluated on an Nvidia P6000 GPU. The reported times correspond to GPU processing time. For the model-driven approach, we consider the computation time on both a CPU and an Nvidia P6000 GPU, whereas the joint space-time optimisation is conducted on the same GPU.

First, let us examine the training time of the TVflowNET models in Table \ref{tb.CH5.TVFlowNET_training_time}. The Semi-ResNet, comprising $14,947$ parameters, exhibited an average training time of $18.3$ hours. In comparison, the U-Net, with $272,972$ parameters, required $20.9$ hours on average for training. The LGD, containing $33,415$ training parameters, required the longest average training time of $21.4$ hours. The models trained on $256 \times 256$px images did not finish running through $750$ epochs in the 36-hour limit that is dictated by the computing resources. When inspecting the time per epoch for the other image sizes, we can see that the Semi-ResNet allows for the fastest training time. As it has the lowest number of parameters, this was to be expected. However, training the U-Net is slightly faster than training the LGD despite having $10$ times the number of parameters. We attribute this behaviour to the difference in network design where the LGD emulates an optimisation approach and includes multiple small networks. 

\begin{table}[t]
\centering
    \begin{tabular}{ccccc}
     \toprule
    \textbf{Architecture} & \multicolumn{4}{c}{\textbf{Evaluation Time (sec)}}                   \\
          & 32 x 32 & 96 x 96 & 256 x 256 & 512 x 512  \\
     \hline
     Semi-ResNet & 0.001 & 0.005 & 0.03 & 0.12\\
     U-Net & 0.003 & 0.01 & 0.05 & 0.19\\
     LGD & 0.008 & 0.014 & 0.073 & 0.29\\
     Model-driven (CPU) & 0.58 & 5.61 & 44.07 & 89.22\\
     Model-driven (GPU) & 3.12 & 3.14 & 3.17 & 3.42\\
     Joint space-time optimisation & 14.09 & 14.39 & 581.87 & 3217.93\\
     \bottomrule
    \end{tabular}
    \caption[Evaluation times in seconds of the TVflowNET, model-based approaches and joint space-time optimisation for different image sizes.]{Evaluation times in seconds of the TVflowNET, model-based approaches and joint space-time optimisation for different image sizes. For the latter approach, the optimisation was run for $2,000$ epochs for image sizes $32 \times 32$ and $96 \times 96$, while for images of size $256 \times 256$ it was run for $14,000$ epochs and for the largest image size $512 \times 512$ for $20,000$ epochs to obtain good quality results.\label{tb.CH5.TVFlowNET_evaluation_time}}
\end{table}

In comparing all three solution approaches, a noticeable difference in their treatment of the time component becomes apparent. The model-driven approach involves the evolution of the flow through time and relies on the TV flow solution from the previous time step. On the other hand, the joint space-time optimisation method obtains results for multiple time instances simultaneously. Still, it requires solution approximations for neighbouring time instances for the finite difference calculation of the temporal derivate. In contrast, the TVflowNET approach does not depend on solving the TV flow at any previous time instances but rather evaluates the solution $u(t)$ solely at the specific time of interest, allowing flexibility and significantly faster evaluation. Furthermore, the TVflowNET is capable of simultaneously processing multiple images. 

In Table \ref{tb.CH5.TVFlowNET_evaluation_time}, we present the results for evaluation time in seconds. As anticipated, the joint space-time optimisation exhibits the longest computation time, steeply increasing with larger image sizes. It is worth noting that the number of iterations required for the optimisation also increases with the image size to achieve high-quality solutions. We report the times for the sake of completeness. 

Comparing the model-driven approaches on CPU and GPU, the GPU implementation becomes advantageous for images larger than $32 \times 32$px when the CPU implementation becomes exceedingly slower. The increase in computation time for the GPU implementation, in turn, is relatively incremental for the tested image sizes. In the case of TVflowNET architectures, the evaluation time remains unaffected by the size of training images; hence, we report results for Semi-ResNet, U-Net, and LGD in general. Regardless of the image sizes tested, the TVflowNET exhibits computation times at least ten times faster than the model-driven approach. The Semi-ResNet and U-Net evaluations are in almost every tested image size at least two orders of magnitude faster than the model-driven approach. The evaluation of the LGD is the slowest among the network architectures considered. Nevertheless, it remains at least one order of magnitude faster than the model-driven approach and even two orders faster for smaller images. Overall, the TVflowNET yields a remarkable improvement in computational speed.

\section{Conclusion}
The numerical solution of the TV flow using model-driven approaches is cumbersome and computationally expensive. In this paper, we have introduced TVflowNET, an unsupervised deep learning approach for approximating the solution of the TV flow given an initial image and a time instance. Our approach achieves a significant computational speedup of up to two orders of magnitude compared to model-driven approaches. The TVflowNET does not rely on ground truth data but is rather based on the PDE itself. We designed a novel loss functional that circumvents the challenges related to the instability of the subgradient of the TV functional. This is achieved by simultaneously learning the TV flow solution and the diffusivity term of the TV flow. We highlighted the feasibility of the loss functional to approximate the TV flow solution by optimising without the use of neural networks. The joint space-time optimisation showed numerically that minimising the loss functional indeed recovers the TV flow solution even without a deep learning framework, however, at a much slower computation time. The high-quality results of the optimisation demonstrate the feasibility of the designed loss for solving the TV flow.

We compared three different architectures for the neural network design: the Semi-ResNet, the U-Net and the LGD. To evaluate the performance of the TVflowNET with these architectures, we carried out an extensive analysis encompassing changing training regimes, evaluation of different image sizes and types to show the generalisability, and the investigation of intrinsic properties of the TV flow such as the behaviour of eigenfunctions and the one-homogeneity. We trained all network architectures on images of different sizes varying from $32 \times 32$px to $256 \times 256$px as well as mixed-size images. 

In terms of generalisability, all network architectures and all training sizes apart from the $256 \times 256$px training were able to successfully approximate the TV flow solution for images of sizes different from the training. In our experiments with varying types of images (e.g., those of discs and ellipses, as well as medical images, all which were not present in the training set), all TVflowNET architectures exhibited outstanding performance, underlining the generalisability of the network. The TVflowNET learned the intrinsic properties of the TV flow solution related to the eigenfunctions, such as disks, without the explicit input in the loss or training approach. 

The one-homogeneity property of the subdifferential of the TV flow dictates that a change in contrast of the initial image is equivalent to a change in time evolution for the TV flow solution. The results show that one-homogeneity is preserved in one direction, that is, for contrast changes that constitute a change in time parameter within the trained time interval. However, the results are less accurate for times outside the trained interval. Among the different network architectures, the U-Net trained on $32 \times 32$px images exhibited the best results with the highest PSNR and SSIM. While the one-homogeneity is not fully preserved in the other direction, this could be solved by training the network for larger time instances. 

Lastly, we compared the computation time of the TVflowNET to the model-driven approaches run on a CPU and a GPU, as well as the joint space-time optimisation. In terms of the evaluation time of the TVflowNET compared to the model-driven approach, we showed that we gained an improvement of around two orders of magnitude and even higher for some image sizes. This is a significant advantage as it allows for a rapid calculation of the TV flow solution. Additionally, the TVflowNET does not rely on the evolution through all times but rather evaluates the solution at a time instance of interest. In that, it is cutting out unnecessary steps and making the derivation even faster. Specifically for large images or when processing a large number of images, this will result in a significant difference and enable not only faster processing but also more flexible use in terms of the time instances evaluated. 

The findings we presented here were given in the form of an ablation study that compared the different architectural designs. Although the majority of the network architectures yielded comparable results across the evaluated properties and images, the U-Net trained on $32 \times 32$px images exhibited the best overall performance. In particular, out of all networks considered, this model demonstrated the best performance in relation to the one-homogeneity condition and the effect of contrast changes that result in a time shift beyond the trained interval. However, it should be noted that training the U-Net requires a considerable amount of time despite yielding very similar outcomes to other architectures. As a close second, the LGD trained on $32 \times 32$px images equally shows outstanding results in terms of generalisability and learned intrinsic properties. The code for the TVflowNET is publically available on GitHub\footnote{\url{https://github.com/TamaraGrossmann/TVflowNET}}.

For future work, we are interested in investigating the learned subgradient in more detail. In the current setting, we have tuned the parameters to render the best results for the TV flow solution. However, the performance for the learned subgradient was lower, which raises the question of whether the learning approach can be amended to recover both the solution operator and the subgradient with equally high accuracy.

\section*{Acknowledgments}
The authors acknowledge the EPSRC grant EP/V026259/1  and the support of the Cantab Capital Institute for the Mathematics of Information. TGG additionally acknowledges the support of the EPSRC National Productivity and Investment Fund grant Nr. EP/S515334/1 reference 2089694. SD is grateful for the support of the AIX-COVNET collaboration which is funded by Intel and the EU/EFPIA Innovative Medicines Initiative project DRAGON (101005122). YK acknowledges the support of the EPSRC (Fellowship EP/V003615/2) and the National Physical Laboratory. CBS acknowledges support from the Philip Leverhulme Prize, the Royal Society Wolfson Fellowship, the EPSRC advanced career fellowship EP/V029428/1, EPSRC grants EP/T003553/1, EP/N014588/1, the Wellcome Trust 215733/Z/19/Z and 221633/Z/20/Z, Horizon 2020 under the Marie Skodowska-Curie No. 777826 NoMADS and the Alan Turing Institute.
We also acknowledge the support of NVIDIA Corporation with the donation of two Quadro P6000 GPUs used for this research.

\bibliographystyle{abbrv}
\bibliography{bibliography}
\end{document}